\newcommand{\bxi}{\bm{\xi}}
\newcommand{\bpsi}{\bm{\psi}}
\newcommand{\bmu}{\bm{\mu}}
\newcommand{\bS}{\bm{\lambda}}
\newcommand{\e}{\mathbb{E}}
\newcommand{\brma}{\bm{a}}
\newcommand{\bq}{\bm{q}}
\newcommand{\blambda}{\bm{\lambda}}
\newcommand{\bLambda}{\bm{\Lambda}}
\newcommand{\wblambda}{\widetilde{\bm{\lambda}}}
\newcommand{\wlambda}{\widetilde{\lambda}}
\newcommand{\bx}{\bm{x}}
\newcommand{\E}{\mathbb{E}}
\newcommand{\dkl}{D_{\textup{KL}}}
\DeclareMathOperator{\T}{\mathsf{T}}
\DeclareMathOperator{\diag}{diag}
\renewcommand{\qedsymbol}{$\blacksquare$}
\newtheorem{theorem}{Theorem}
\newtheorem{corollary}{Corollary}
\newtheorem{assumption}{Assumption}
\newtheorem{application}{Application}
\def \rma{\mathrm{a}}
\def \brma{\bm{\mathrm{a}}}
\def \brmx{\bm{\mathrm{x}}}
\def \rmd{\mathrm{d}}
\def \rms{\mathrm{s}}
\def \rmp{\mathrm{p}}
\def \rmP{\mathrm{P}}
\def \rmA{\mathrm{A}}
\def \rmR{\mathrm{R}}
\def \rmU{\mathrm{U}}
\def \rmI{\mathrm{I}}
\def \rmM{\mathrm{M}}
\begin{document}

\title{Causal Influence in Federated Edge Inference}

\author{Mert Kayaalp,
        Yunus \.Inan,
        Visa Koivunen,
        Ali H. Sayed   
\thanks{M. Kayaalp, Y. \.Inan, and A. H. Sayed are with the École Polytechnique Fédérale de Lausanne (EPFL), Switzerland. V. Koivunen is with the Aalto University, Finland. Corresponding author: M. Kayaalp. Email: mert.kayaalp@epfl.ch} 
}
\maketitle

\begin{abstract}
In this paper, we consider a setting where heterogeneous agents with connectivity are performing inference using unlabeled streaming data. Observed data are only partially informative about the target variable of interest. In order to overcome the uncertainty, agents cooperate with each other by exchanging their local inferences with and through a fusion center. To evaluate how each agent influences the overall decision, we adopt a causal framework in order to distinguish the actual influence of agents from mere correlations within the decision-making process. Various scenarios reflecting different agent participation patterns and fusion center policies are investigated. We derive expressions to quantify the causal impact of each agent on the joint decision, which could be beneficial for anticipating and addressing atypical scenarios, such as adversarial attacks or system malfunctions. We validate our theoretical results with numerical simulations and a real-world application of multi-camera crowd counting.
\end{abstract}

\section{Introduction}

\IEEEPARstart{A}{utonomous} systems are generally equipped with sensing and computing capabilities and wireless connectivity to enable prompt decisions based on real-time streaming observations. One example is self-driving vehicles, which need to react to changes in road and other traffic conditions in real-time --- see Fig.~\ref{fig:federated_cars} for a visual illustration. In situations involving multiple agents observing a common state of nature or phenomenon, cooperative decision-making becomes beneficial because the individual sensor observations may only carry partial information about the phenomenon of interest \cite{Sayed14}. For instance, the variable of interest might not be directly observable and might require an estimation using observable signals. Alternatively, agents might only see parts of the phenomenon or experience partial observability due to obstructions or interference.

The main advantage of cooperative decision-making lies in the diverse information provided by distinct agents. However, this very strength introduces its own challenges. On one hand, reliance on potentially outlying observations can lead to erroneous inferences or expose the system to adversarial threats. On the other hand, some outliers may provide critical observations that are crucial for informed decisions. Therefore, understanding to what extent an agent impacts the decision in a multi-agent system is an important question for many applications. 

In this study, we approach the concept of influence in multi-agent systems from a \emph{causal} perspective \cite{pearl2009causality,peters2017elements,kayaalp2023causal}. This distinction is important because correlation-inducing confounding factors between agents can lead to wrong conclusions about the impact \cite{kayaalp2023causal}. For instance, sensors located near each other may gather similar data, which could falsely be interpreted as one sensor influencing the other. By employing a causal framework, we can more accurately distinguish the true influences within a multi-agent system. Two typical applications are the following:

 \begin{figure}[]
     \centering
     \includegraphics[width=0.98\columnwidth]{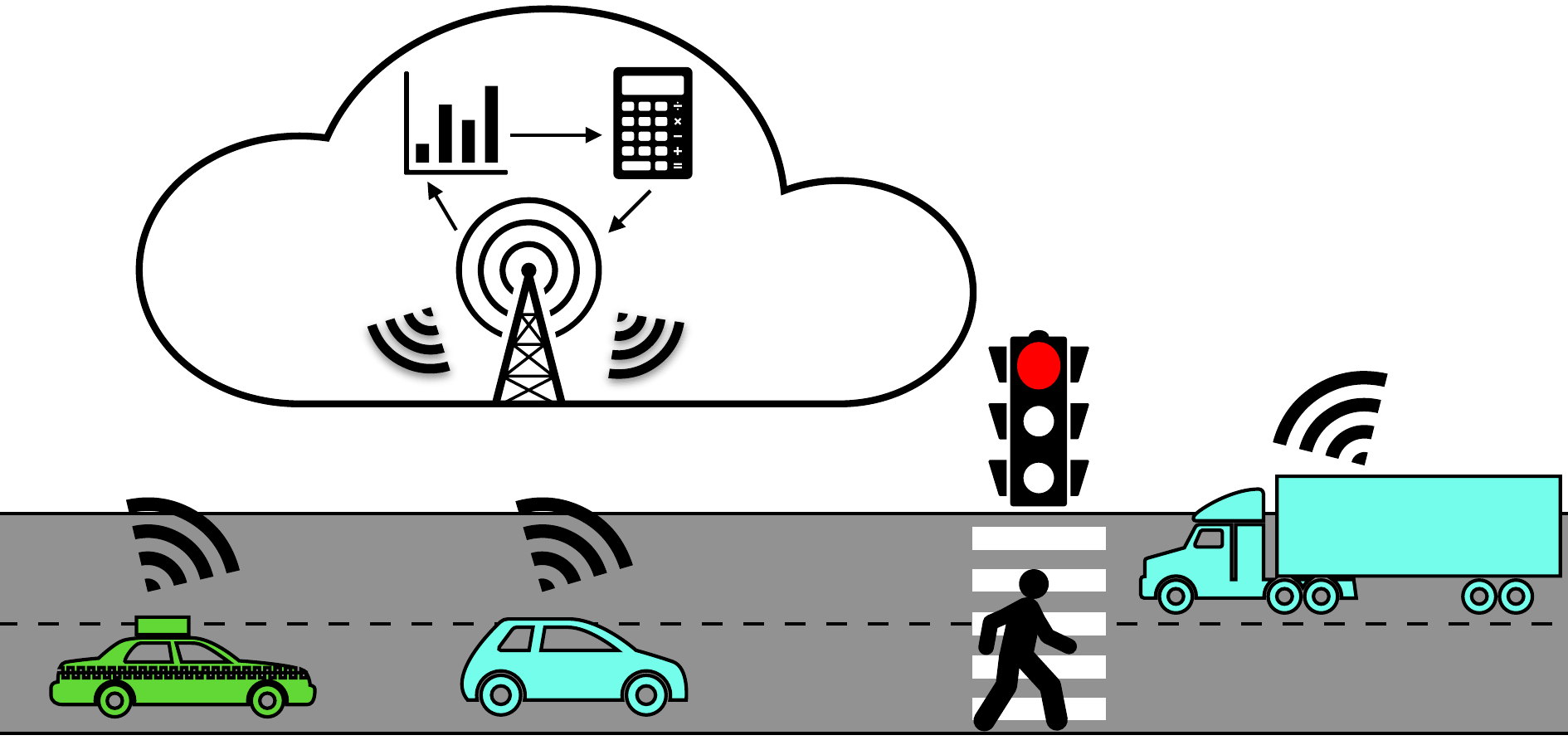}
     \caption[]{\small Intelligent vehicles and infrastructure can collaborate to enhance awareness of road conditions. Real-time and spontaneous cooperation is crucial in this context, as it allows for immediate responses to dynamic conditions, and hence improving the safety and efficiency of transportation.}
     \label{fig:federated_cars}
 \end{figure}

\begin{application}[\textbf{Vehicular ad-hoc networks}]\label{application:vanets}
Intelligent vehicles on the same road can collaborate to better understand the road and traffic conditions. These interactions typically happen in ad-hoc scenarios characterized by asynchronicity and potential latent confounding factors. In this context, to ensure robust decisions that appropriately consider outlier agents, our method can be applied to analyze each agent’s causal contribution.
\end{application}

\begin{application}[\textbf{Environmental sensing}] 
Cooperation among sensors is widely-used for accurately inferring the state of the environment or electromagnetic spectrum. Consider a scenario where one sensor starts transmitting data that significantly differs from the rest, perhaps due to some unexpected malfunction, interference or adversarial intent. Employing our methods to quantify the impact of such deviations on the joint decision can allow the fusion center to set thresholds for identifying and potentially discarding outlier data. \hfill \qedsymbol
\end{application}

\subsection{Contributions}

\begin{itemize}
\item  We build upon the collaborative decision-making framework of \cite{kayaalp2023fusion}, which involves heterogeneous agents exchanging beliefs (or soft-decisions) through a fusion center (FC) based on streaming observations. Furthermore, to better capture the real-world conditions, we will incorporate two asynchronicity scenarios to this framework; the scenarios differ in agent participation patterns and in FC policies.
\item By applying hypothetical interventions \cite{pearl2009causality,peters2017elements, kayaalp2023causal} on our model, we implement a method to calculate causal impact scores for each agent's contribution to the joint decision. We also provide a theoretical analysis of participation patterns, FC policies, and data distribution on the decision-making process.
\item We validate our theoretical findings with numerical simulations and also apply our methods to real-world data from a multi-camera crowd-size estimation application \cite{chavdarova_wildtrack}.
\end{itemize}

This paper is organized as follows. Section~\ref{sec:related_work} surveys the related work in the literature. Section~\ref{sec:problem_formulation} revisits the cooperative inference framework from \cite{kayaalp2023fusion}, and extends it with two distinct asynchronous agent behavior scenarios. Section~\ref{sec:causal_def} presents our definitions of the causal impact in these frameworks. Section~\ref{sec:theoretical_results} derives closed-form expressions for the impacts of agents and other theoretical contributions. Then, in Section~\ref{sec:numerical}, we illustrate our theoretical results using synthetic data and also apply our methods to a real-world scenario of crowd counting with multi-sensor data.

\section{Related Work}\label{sec:related_work}

\subsection{Causal influence estimation}

Influence analysis within multi-agent systems has roots in cooperative game theory, notably through the Shapley value that assesses an agent's marginal contribution to the collective output \cite{shapley_53}. There are also methods that quantify influence through centrality or application-specific metrics \cite{dablander2019node,valentina2023discovering}. However, these approaches can fall short in distinguishing causal influences from correlational associations. Therefore, in this work, we approach influence analysis from a structural causal modeling framework \cite{pearl2009causality,peters2017elements,kayaalp2023causal}, where we measure the effect of a factor on an outcome by \textit{intervening} on that factor while keeping other variables fixed.

Randomized controlled trials to understand interventional effects are often not feasible in real-world applications. In these cases, causal inference methods \cite{pearl2009causality} can help discover interventional effects using \textit{observational} data and a model. In fields like signal processing, control, and communications, the modular structure of systems usually offers a straightforward understanding of the data generative process. Unfortunately, this understanding is often neglected, although it can aid in the direct application of causal inference method. This is in contrast to the areas such as healthcare where the need to learn causal representations from data \cite{peters2017elements,proc_ieee_causality} remains a significant challenge.

\subsection{Cooperative inference}

Our work builds upon the framework of \cite{kayaalp2023fusion}, which is a special case of locally Bayesian social learning over networks \cite{jadbabaie_2012, zhao_2012, nedic_2017, lalitha_2018, bordignon2021adaptive, shaska_2023, kayaalp2022aaga_journal}. In social learning, agents cooperate by processing streaming observations and exchanging inference results among each other, instead of directly sharing data. As opposed to the related literature on decentralized detection and data fusion \cite{hoballah_89, tsitsiklis1988decentralized, tsitsiklis93, varshney2012book, zou2010cooperative, bajovic2012large, li2020_aa_tsp,inan2022fundamental}, in social learning, the emphasis is on the active computation and information exchange by intelligent agents rather than the passive information relay to a FC by the nodes. Moreover, the federated architecture we consider coincides with that of the federated learning literature that promotes cooperative model training across multiple entities \cite{mcmahan2017,li2020,kairouz2021advances,rizk2022federated,ozkara2023a,elbir2022federated} without data exchanges. However, unlike federated learning's focus on model training with distributed data, our work focuses on the collaborative inference in the post-training (prediction) phase. \\

\noindent \textbf{Notation:} Random variables are written in boldface letters. We use the ``proportional to'' symbol \(\propto\) whenever the LHS of an equation is a proper normalization of the RHS. For example, for \( \theta \in \Theta\) and a function \( f \):
\begin{equation}
    \mu (\theta) \propto f (\theta) \Longleftrightarrow \mu (\theta) = \frac{f (\theta)}{\sum_{\theta^\prime \in \Theta }f (\theta^\prime)}.
\end{equation}
The Kullback-Leibler (KL) divergence \cite{csiszar2011information} between two probability distributions \(p\) and \(q\) is denoted by \(\dkl (p||q)\). Moreover, $\mathds{1}_K$ denotes a vector of dimension-$K$ with all entries equal to $1$. Following the notation in \cite{kayaalp2023causal}, we use \(\sim\) to denote the counterparts of variables after an intervention (e.g., $\widetilde{\lambda}$ represents the variable $\lambda$ after an intervention).

\section{Federated Inference}\label{sec:problem_formulation}

\subsection{Synchronous Collaboration}\label{sec:fed_inference}

We start by revisiting the synchronous federated decision-making setting of \cite{kayaalp2023fusion} --- see Fig.~\ref{fig:federated_no_intervention}. Consider a setting where a group of \( K \) agents (e.g., clients, sensors, machines) wish to discover the true state of nature \( \theta^\circ \) from a set of $H$ potential hypotheses \( \Theta \triangleq \{\theta_1, \dots, \theta_H\} \), with the help of a fusion center (e.g., cloud, base station). For instance, autonomous vehicles on the same road can be connected to a cloud with the objective of assessing the road and traffic conditions (e.g., \{\textit{crowded}, \textit{accident}, \textit{normal}\}). 

At each time instant \( i \), each agent \( k \) acquires an observation \( \bxi_{k,i} \). This observation conveys partial information about $\theta^\circ$ due to each agent's potentially limited or noisy view of the overall phenomenon. Furthermore, data across agents are not necessarily assumed to be independent. This is common in applications where ensuring spatial independence of observations is impractical. In such environments, there might be confounding factors affecting multiple agents simultaneously, which makes interpreting influence from a causal perspective crucial.

Instead of directly transmitting the raw observations \( \bxi_{k,i} \) to the central server, each agent \( k \) processes its data locally with a personalized likelihood model \cite{sayed_2022}. This model serves as an approximation of the true data-generative process and can be learned, for example, using a neural network. Agent $k$ then incorporates the likelihood score $L_k(\bxi_{k,i} | \theta)$ into the Bayes' rule for obtaining an intermediate belief about which hypothesis is the true one:
\begin{align}\label{eq:dif_adapt_step}
 \bpsi_{k,i} (\theta) &\propto L_k(\bxi_{k,i} | \theta)\bmu_{i-1} (\theta) \quad \text{(Adapt)}
\end{align}
Here, $\bmu_{i-1}$ is the prior belief, which is a probability mass function (pmf) over $\Theta$. The symbol \(\propto\) is a shorthand notation for the following normalization:
\begin{align}\label{eq:dif_adapt_step_unnormal}
 \bpsi_{k,i} (\theta) = \dfrac{L_k(\bxi_{k,i} | \theta)\bmu_{i-1} (\theta)}{\sum_{\theta^\prime} L_k(\bxi_{k,i} | \theta^\prime)\bmu_{i-1} (\theta^\prime)}.
\end{align}
After the self-adaptation step \eqref{eq:dif_adapt_step}, agent \( k \) forwards the intermediate belief \( \bpsi_{k,i} \) to the fusion center (FC). The FC may lack knowledge about the system's joint data distribution, the observations at the agents, or the agents' likelihood models. This can be because of the spontaneous formation of the collaboration, or constraints on privacy. Therefore, the FC employs a weighted geometric averaging of the received information in a non-Bayesian manner \cite{jadbabaie_2012,zhao_2012,nedic_2017,lalitha_2018, bordignon2021adaptive}:
\begin{align}\label{eq:geometric_fusion}
   \bmu_{i}(\theta) &\propto  \prod_{k=1}^K (\bpsi_{k,i} (\theta ))^{\pi_k} \quad \text{(Combine)}.
\end{align}
Here, \( \pi \triangleq [\pi_1, \dots, \pi_K]^{\T} \) is a vector of confidence weights \( \pi_k \in (0,1) \) assigned by the fusion center to each agent \( k \) \cite{Sayed14,varshney2012book}, potentially formed from the previous interactions with the agents. These weights are assumed to be positive constants that add up to $1$. The server then sends the aggregated belief back to the agents. This procedure of local updating and exchanging of beliefs is executed repeatedly at every time instant. The procedure is summarized in Algorithm~\ref{alg:sync_alg}.

  \begin{figure}[]
     \centering
     \includegraphics[width=0.8\columnwidth]{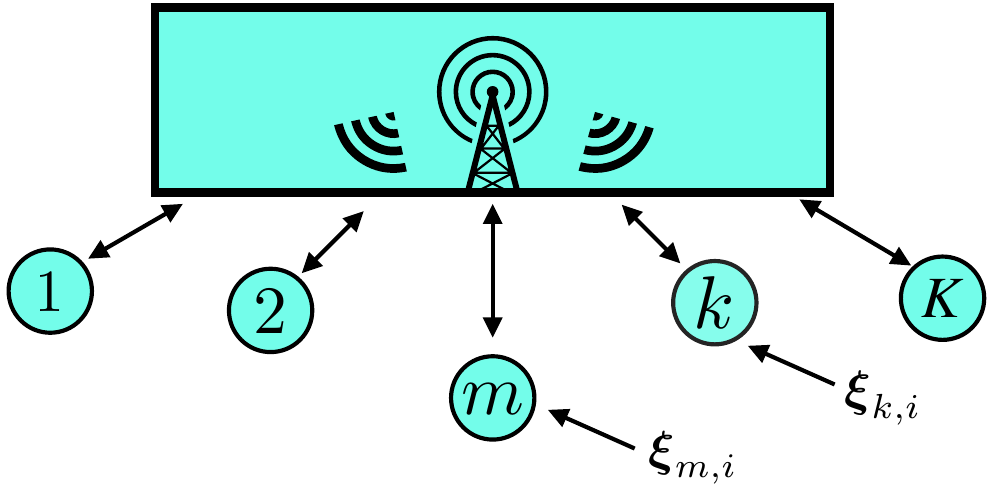}
     \caption[]{\small Visual representation of the federated inference framework. At each time instant $i$, $(a)$ each agent receives an external observation, $(b)$ processes it locally and transmits it to a fusion center (FC), and $(c)$ FC center broadcasts the combined soft-decision (belief) back to agents.}
     \label{fig:federated_no_intervention}
 \end{figure}

  \begin{algorithm}[]
 \caption{Synchronous federated inference}
 \begin{algorithmic}[1]
    \item set initial prior to $\bmu_{0}(\theta)>0$,  $\forall \theta \in\Theta$ and $\forall k$
    \While{$i\geq 1$} 
 \For{each agent $k$} 
 \State receive private observation $\bxi_{k,i}$; \State \textbf{adapt} to obtain intermediate belief:
\begin{equation}
\bpsi_{k,i} (\theta) \propto L_k(\bxi_{k,i} | \theta)\bmu_{i-1} (\theta) 
\end{equation}
\EndFor
\State \textit{all} agents send their local intermediate beliefs to FC;
\State FC \textbf{combines} the local beliefs:
\begin{align}
   \bmu_{i}(\theta) &\propto  \prod_{k=1}^K (\bpsi_{k,i} (\theta ))^{\pi_k} 
\end{align}
\State FC broadcasts $\bmu_{i}$ to \textit{all} agents;
\State \( i \leftarrow i+1\)
\EndWhile
 \end{algorithmic} \label{alg:sync_alg}
 \end{algorithm}

\subsection{Two Asynchronous Scenarios}\label{sec:async_behavior}

Asynchronous behavior is common in many real-world distributed systems, and is particularly relevant for ad-hoc networks where time scheduling beforehand may not be plausible. To that end, we consider two scenarios that are distinct based on the symmetry of communication between the agents and the FC. For both scenarios, we use the Bernoulli variable $\bq_{k,i}$ (with parameter $p_k$)
to indicate if agent $k$ is sharing its intermediate belief $\bpsi_{k,i}$ with the server at time $i$, namely,
\begin{equation}\label{eq:qki_pk_def}
    \bq_{k,i} =
    \begin{cases}
  1, \quad \text{with probability} \: p_k\\0, \quad \text{otherwise}
\end{cases}.
\end{equation}
We assume the process $\{\bq_{k,i}\}$ is independent and identically distributed (i.i.d.) over time and independent over space. 
 \subsubsection{Asymmetric communication} There can be instances when agents, despite being active, do not transmit information to the FC and remain idle in terms of data sharing. This non-engagement can be due to various factors, such as the need to conserve energy, particularly important for battery-operated agents where excessive transmission can lead to rapid battery depletion. Other reasons might include non-informative soft decisions, or the lack of significant changes in intermediate statistics since the previous transmission. However, these agents can keep receiving updates from the server. A possible reason for this disparity is that the uplink cost (from agent to server) is typically higher than the downlink cost (from server to agent). For instance, the downlink could be broadcast, i.e., the same message is transmitted to all agents without the need to exchange information separately with each individual agent. In this case, the FC can fill the belief components of missing agents with its own prior while aggregating information. Therefore, the combination step \eqref{eq:geometric_fusion} at the server side changes to  
\begin{align}\label{eq:geometric_fusion_c1}
   \bmu_{i}(\theta) &\propto  \prod_{k=1}^K \Big (\bpsi_{k,i}^{\bq_{k,i}} (\theta )  \bmu_{i-1}^{1-\bq_{k,i}} (\theta) \Big )^{\pi_k}.
\end{align}
Nevertheless, the adaptation step \eqref{eq:dif_adapt_step} at the agent side remains unchanged and agents continue to utilize the beliefs received from the server locally. The procedure under asymmetric asynchronicity is summarized in Algorithm~\ref{alg:asymmetric}.

It is worth noting the parallel between this scenario and the traditional distributed detection strategies \cite{varshney2012book,tsitsiklis1988decentralized,inan2022fundamental}. Since the server knows the previous combined belief $\bmu_{i-1}$, the action of sharing intermediate beliefs $\{\bpsi_{k,i}\}$ by agents is essentially equivalent to them sharing the observation likelihoods $\{L_k(\bxi_{k,i} | \theta)\}$ due to \eqref{eq:dif_adapt_step}. Similarly, nodes (e.g., sensors) relay a sufficient statistics such as their local likelihoods or likelihood ratios to the FC in \cite{varshney2012book,tsitsiklis1988decentralized,inan2022fundamental}. The difference is that in those works, the FC does not communicate any information back to the nodes. \\

  \begin{algorithm}[]
 \caption{Asymmetric communication}
 \begin{algorithmic}[1]
    \item set initial prior to $\bmu_{0}(\theta)>0$,  $\forall \theta \in\Theta$ and $\forall k$
    \While{$i\geq 1$} 
 \For{each agent $k$} 
 \State receive private observation $\bxi_{k,i}$; \State \textbf{adapt} to obtain intermediate belief:
\begin{equation}
\bpsi_{k,i} (\theta) \propto L_k(\bxi_{k,i} | \theta)\bmu_{i-1} (\theta) 
\end{equation}
\EndFor
\State each agent $k$ will send its intermediate belief to FC \textit{if} $\bq_{k,i} = 1$ (with probability $p_k$);
\State FC \textbf{combines} the received beliefs and its own prior: 
\begin{align}
   \bmu_{i}(\theta) \propto  \prod_{k=1}^K \Big (\bpsi_{k,i}^{\bq_{k,i}} (\theta )  \bmu_{i-1}^{1-\bq_{k,i}} (\theta) \Big )^{\pi_k}
\end{align}
\State FC broadcasts $\bmu_{i}$ to \textit{all} agents
\State \( i \leftarrow i+1\)
\EndWhile
 \end{algorithmic} \label{alg:asymmetric}
 \end{algorithm}

\subsubsection{Symmetric communication} Another possibility is that an agent does not receive any update from the server if that agent has not transmitted information to the central processor at that time instant. In other words, the absence of communication is reciprocal. This situation could arise in cases where the quality of the connection is not adequate for reliable communication.
Alternatively, for various reasons, a server might strategically choose not to update agents that do not contribute information. By doing so, it can incentivize data sharing and promote a give-and-take dynamics. In this scenario, the combination step at the server side is given by \eqref{eq:geometric_fusion_c1}, whereas the adaptation step \eqref{eq:dif_adapt_step} at the agents becomes
\begin{align}\label{eq:dif_adapt_step_c2}
 \bpsi_{k,i} (\theta) &\propto \begin{cases}
     L_k(\bxi_{k,i} | \theta)\bmu_{i-1} (\theta), \quad &\text{if} \: \bq_{k,i-1} = 1  \\
     L_k(\bxi_{k,i} | \theta)  \bpsi_{k,i-1} (\theta), \quad &\text{if} \: \bq_{k,i-1} = 0  
 \end{cases} .
\end{align}
The rationale behind \eqref{eq:dif_adapt_step_c2} is as follows. If agent $k$ has shared information with the server (i.e., $\bq_{k,i-1} = 1$) at time $i-1$, the server returns the combined belief $\bmu_{i-1}$ to that agent. On the other hand, if the agent has not participated in the information exchange (i.e., $\bq_{k,i-1} = 0$), then the server does not provide the updated belief and the agent resorts to its own belief $\bpsi_{k,i-1}$ as a prior for the update at the next time instant $i$. The procedure under symmetric asynchronicity is summarized in Algorithm~\ref{alg:symmetric}.

  \begin{algorithm}[]
 \caption{Symmetric communication}
 \begin{algorithmic}[1]
    \item set initial prior to $\bmu_{0}(\theta)>0$,  $\forall \theta \in\Theta$ and $\forall k$
    \While{$i\geq 1$} 
 \For{each agent $k$} 
 \State receive private observation $\bxi_{k,i}$ \State \textbf{adapt} to obtain intermediate belief:
\begin{align}
 \bpsi_{k,i} (\theta) &\propto \begin{cases}
     L_k(\bxi_{k,i} | \theta)\bmu_{i-1} (\theta), \quad &\text{if} \: \bq_{k,i-1} = 1  \\
     L_k(\bxi_{k,i} | \theta)  \bpsi_{k,i-1} (\theta), \quad &\text{if} \: \bq_{k,i-1} = 0  
 \end{cases}
\end{align}
\EndFor
\State each agent $k$ will send its intermediate belief to FC \textit{if} $\bq_{k,i} = 1$ (with probability $p_k$)
\State FC \textbf{combines} the received beliefs and its own prior: 
\begin{align}
   \bmu_{i}(\theta) \propto  \prod_{k=1}^K \Big (\bpsi_{k,i}^{\bq_{k,i}} (\theta )  \bmu_{i-1}^{1-\bq_{k,i}} (\theta) \Big )^{\pi_k}
\end{align}
\State FC sends $\bmu_{i}$ \textit{only} to agents that have participated in the cooperation in the current round (i.e., $\bq_{k,i} = 1$)
\State \( i \leftarrow i+1\)
\EndWhile
 \end{algorithmic} \label{alg:symmetric}
 \end{algorithm}

\section{Causal Influence}\label{sec:causal_def}

We extend the causal effect definition from \cite{kayaalp2023causal}. The main motivation for the definition is that the influence of an agent $m$ on the collective decision should be proportional to the ``amount'' by which the outcome changes when this agent is intervened upon. In other words, the alteration of the outcome under a manipulation on the agent quantifies the causal influence. To this end, when an intervention occurs on agent $m$, we decouple its belief $\bpsi_{m,i}$ from other beliefs and observations and \emph{fix} it at some constant pmf, say, $\bpsi_{m,i}=\mu_m$ --- see Fig.~\ref{fig:federated_intervention} for a representation of an intervention on Fig.~\ref{fig:federated_no_intervention}. 

As an illustration, recall Application~\ref{application:vanets} on cooperative vehicular networks. Consider a scenario in which these vehicles navigate a dry road while receiving noisy data from their sensors. To measure the causal effect of an individual vehicle on the collective decision, we can ask the following question: How would the group's decision on road conditions change if a single vehicle, despite the actual conditions and data from other vehicles, consistently reported that the road is icy? A significant difference in the collective decision would imply an influential role for that vehicle. Conversely, a minimal alteration suggests a negligible causal effect. Furthermore, this effect is a \textit{causal} effect since the hypothetical intervention we consider directly targets the agent. Namely, it is irrespective of other environmental factors and vehicles that might otherwise induce non-causal correlations.

We will establish in Theorem~\ref{theorem:no_intervention} that in the absence of any intervention, the belief vector $\bmu_i$ converges to a steady-state value $\mu_{\infty}$ that places a probability value of $1$ on the true hypothesis $\theta^\circ$ as $i \to \infty$. When an intervention occurs at agent $m$, the steady-state belief vector will be denoted by $\widetilde{\mu}_{\infty}$. As such, we can quantify the causal impact of agent $m$ on the joint decision by using the difference:
\begin{equation}\label{eq:cm_general_def}
C_m \triangleq 1 - \widetilde{\mu}_{\infty}(\theta^{\circ}).
\end{equation}
Expression \eqref{eq:cm_general_def} measures the expected shift in the steady-state belief on the true hypothesis $\theta^\circ$ due to an intervention on agent $m$. Note that as in \cite{kayaalp2023causal}, we express the belief $\widetilde{\mu}_{\infty}(\theta^{\circ})$ in the form:
\begin{equation}\label{eq:expected_log_belief_trans}
\widetilde{\mu}_{\infty}(\theta^{\circ}) \triangleq \dfrac{1}{1 +\sum\limits_{\theta \neq \theta^\circ} \exp \{ - \widetilde{\lambda}_{\infty}(\theta) \}}.
\end{equation}
The variable $\widetilde{\lambda}_{\infty}(\theta)$ is defined as follows. First, we introduce the notation 
\begin{equation}
    \lambda_{\infty}(\theta) \triangleq \lim_{i \to \infty} \e [ \bS_{i}(\theta) ] 
\end{equation}
to represent the expected log-belief ratio in the limit with the variables $\bS_{i}(\theta)$ defined by
\begin{equation}
 \bS_{i}(\theta) \triangleq \log \dfrac{\bmu_{i} (\theta^\circ)}{\bmu_{i} (\theta)}.
\end{equation}
Then, we recall that we use \(\sim\) to denote interventional counterparts of these variables, which means
\begin{equation}
 \widetilde{\bS}_{i}(\theta) \triangleq \log \dfrac{\widetilde{\bmu}_{i} (\theta^\circ)}{\widetilde{\bmu}_{i} (\theta)}
\end{equation}
represents the log-belief ratio under an intervention. Therefore, the variable
\begin{equation}
    \widetilde{\lambda}_{\infty}(\theta) \triangleq \lim_{i \to \infty} \e [ \widetilde{\bS}_{i}(\theta) ]
\end{equation}
represents the expected asymptotic log-belief ratio under an intervention. Note that the existence of the limits are guaranteed under the mild assumptions of finite KL divergences and positive initial beliefs \cite{kayaalp2023causal}, which will be introduced in the next section.

  \begin{figure}[]
     \centering
     \includegraphics[width=0.8\columnwidth]{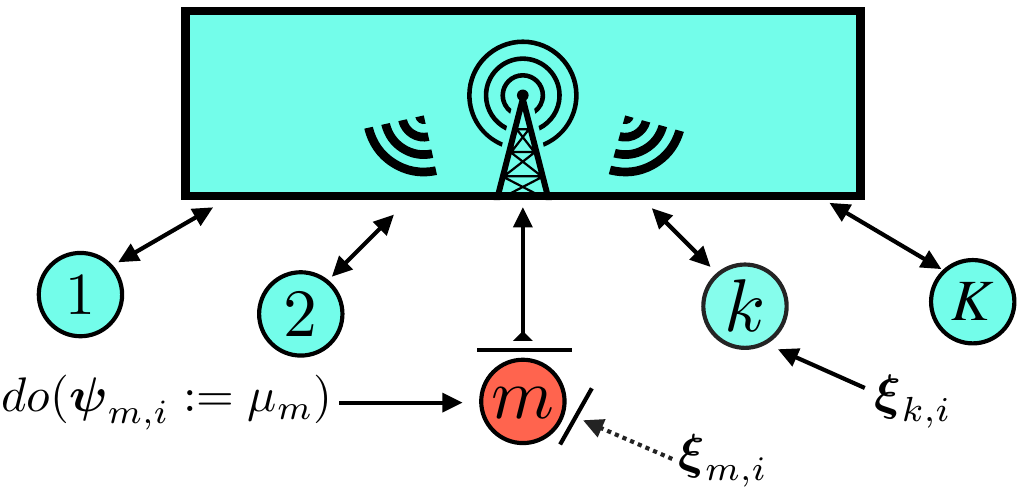}
     \caption[]{\small Visual representation of a hypothetical intervention $do(\bpsi_{m,i} := \mu_m)$ on the graphical model in Fig.~\ref{fig:federated_intervention}. Agent $m$ keeps sending information to the server with probability $p_m$, however, its belief is now fixed and is not dependent on any other variable.}
     \label{fig:federated_intervention}
 \end{figure}

\section{Theoretical Results}\label{sec:theoretical_results}

We begin by examining the system in the observational (i.e., pre-intervention) mode of the system. For this, we first define the informativeness level of each agent $k$ as
\begin{equation}
d_k (\theta) \triangleq \dkl \big (L_k(\cdot|\theta^{\circ}) || L_k(\cdot|\theta) \big)
\end{equation}
which represents how informative agent $k$'s observations are for distinguishing the true hypothesis $\theta^\circ$ from some other hypothesis $\theta$. We assume that the likelihood functions $L_k(\cdot|\theta)$ have the same support, which is a necessary condition to remove pathological cases in which a single observation can be sufficient to distinguish the truth with absolute certainty. More formally,
\begin{equation}
    \dkl( L_k (\cdot | \theta^\circ ) ||  L_k (\cdot | \theta )) < \infty 
\end{equation}
for each agent \( k \) and for each hypothesis \( \theta \in \Theta \). Furthermore, the following is a canonical assumption in local Bayesian learning \cite{jadbabaie_2012,zhao_2012, nedic_2017, lalitha_2018, bordignon2021adaptive} to ensure that aggregate of all agents can distinguish the true hypothesis from the wrong ones.
\begin{assumption}[\textbf{Global identifiability}]\label{assum:global_identification}
For each incorrect hypothesis \( \theta \in \Theta \setminus \{\theta^\circ\} \), there exists at least one agent \( k_\theta \) with \(  \dkl( L_{k_\theta} (\cdot | \theta^\circ ) ||  L_{k_\theta} (\cdot | \theta )) > 0 \) that can distinguish $\theta$ and $\theta^\circ$. \hfill\qedsymbol
\end{assumption}
Moreover, for the FC not to discard any hypothesis from the beginning, initial beliefs need to have full support, namely, that $\bmu_{0}(\theta)>0$  $\forall \theta \in\Theta$ \cite{kayaalp2022aaga_journal}. Under these conditions, the following result describes the evolution of the beliefs under no intervention.
\begin{theorem}[\textbf{Pre-intervention}]\label{theorem:no_intervention}
   For the synchronous as well as the symmetric and asymmetric asynchronous communication protocols discussed in Sec.~\ref{sec:problem_formulation}, the belief vector $\bmu_i$ converges to a steady-state probability mass function that places a value of $1$ on the true hypothesis $\theta^\circ$ almost surely:
   \begin{equation}
       \lim_{i \to \infty} \bmu_i (\theta^\circ) = 1 \quad \text{with probability 1}.
   \end{equation}
\end{theorem}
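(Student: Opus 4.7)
The plan is to pass to log-belief ratios $\bS_i(\theta) \triangleq \log(\bmu_i(\theta^\circ)/\bmu_i(\theta))$ and show that $\bS_i(\theta) \to +\infty$ almost surely for every $\theta \neq \theta^\circ$; since $\bmu_i$ is a pmf on the finite set $\Theta$, this is equivalent to the claim $\bmu_i(\theta^\circ) \to 1$ a.s. Throughout, set $\bell_{k,i}(\theta) \triangleq \log\bigl(L_k(\bxi_{k,i}|\theta^\circ)/L_k(\bxi_{k,i}|\theta)\bigr)$; these are i.i.d.\ across time with mean $d_k(\theta)$ and absolutely integrable under the finite-KL assumption.

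For the synchronous protocol, combining \eqref{eq:dif_adapt_step_unnormal} with \eqref{eq:geometric_fusion} and using $\sum_k \pi_k = 1$ yields the random-walk recursion $\bS_i(\theta) = \bS_{i-1}(\theta) + \sum_k \pi_k\, \bell_{k,i}(\theta)$. The increments are i.i.d.\ with mean $\sum_k \pi_k d_k(\theta)$, which is strictly positive by Assumption~\ref{assum:global_identification} together with $\pi_k>0$, so the SLLN delivers $\bS_i(\theta)/i \to \sum_k \pi_k d_k(\theta) > 0$ a.s. For the asymmetric protocol \eqref{eq:geometric_fusion_c1}, the analogous manipulation carries the indicator $\bq_{k,i}$ along, yielding $\bS_i(\theta) = \bS_{i-1}(\theta) + \sum_k \pi_k \bq_{k,i}\, \bell_{k,i}(\theta)$, again an i.i.d.\ random walk, now with strictly positive mean $\sum_k \pi_k p_k d_k(\theta)$, so the SLLN again closes the argument.

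The symmetric protocol is the real hurdle, because agents now carry their own belief $\bpsi_{k,i-1}$ as prior and $\bS_i$ no longer satisfies a closed random-walk recursion. Let $\sigma_k(i) \triangleq \max\{t < i : \bq_{k,t}=1\}$ be agent $k$'s last rendezvous with the fusion center before time $i$. Unfolding \eqref{eq:dif_adapt_step_c2} between consecutive rendezvous gives
\[
 \log\frac{\bpsi_{k,i}(\theta^\circ)}{\bpsi_{k,i}(\theta)} \,=\, \bS_{\sigma_k(i)}(\theta) \,+\, \sum_{j=\sigma_k(i)+1}^{i} \bell_{k,j}(\theta),
\]
showing that no observation is ever discarded: every $\bell_{k,j}$ is eventually deposited into the fusion center the next time agent $k$ participates. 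Substituting this into \eqref{eq:geometric_fusion_c1} produces a Markovian coupling between $\bS_i$ and each agent's running log-belief ratio.

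The main obstacle here is that $\bS_{i-1}$ and $\bq_{k,i-1}$ are not independent (the server's belief at time $i-1$ already depends on the same participation pattern), which rules out a naive expectation recursion. The plan to overcome this is regenerative in nature: with $p_k>0$ each agent rejoins infinitely often a.s., and the inter-rendezvous epochs form a jointly stationary ergodic renewal process with i.i.d.\ blocks of $\bell_{k,\cdot}$'s having positive conditional mean drift $d_k(\theta)$. These blocks are incorporated into $\bS_i$ via \eqref{eq:geometric_fusion_c1} at a rate proportional to $\pi_k p_k$. Applying Birkhoff's ergodic theorem to the jointly stationary process $\{\bq_{k,i}, \bxi_{k,i}\}_i$ then yields $\bS_i(\theta)/i \to c_\theta$ a.s.\ for some constant $c_\theta > 0$ whose positivity is inherited from Assumption~\ref{assum:global_identification}. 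Hence $\bS_i(\theta) \to +\infty$ a.s., which is the desired conclusion.
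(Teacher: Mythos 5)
Your treatment of the synchronous and asymmetric protocols is correct and follows the same route as the paper: both reduce to the additive recursion $\bS_i(\theta)=\bS_{i-1}(\theta)+\sum_k \pi_k\bq_{k,i}\bell_{k,i}(\theta)$ with i.i.d.\ increments of mean $\sum_k\pi_k p_k d_k(\theta)>0$. In fact your SLLN argument is more faithful to the almost-sure claim than the paper's own proof, which only takes expectations and shows $\lambda_i(\theta)\to\infty$; the i.i.d.\ random-walk structure is exactly what licenses upgrading that to a pathwise statement.

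The symmetric protocol is where your argument has a genuine gap. Your unfolding identity for $\log(\bpsi_{k,i}(\theta^\circ)/\bpsi_{k,i}(\theta))$ is correct, but note what it implies for the server: when agent $k$ reports at time $i$, its contribution is anchored at the \emph{stale} value $\bS_{\sigma_k(i)}(\theta)$, not at $\bS_{i-1}(\theta)$, so the server's log-belief ratio evolves as a random convex combination (weights $\pi_k$ summing to one) of lagged copies of itself plus freshly deposited increments. This is a random affine averaging recursion, not an additive functional of the stationary process $\{\bq_{k,i},\bxi_{k,i}\}_i$, and Birkhoff's ergodic theorem does not apply to it directly: the theorem gives limits of time averages of a fixed observable, whereas here you need a strong law for the iterated map itself. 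Your proposal asserts the conclusion $\bS_i(\theta)/i\to c_\theta>0$ without supplying the lemma that would justify it. (A symptom of the missing analysis: you guess the deposit rate is ``proportional to $\pi_k p_k$,'' whereas the correct per-agent weight is $\pi_k/(1-\pi_k(1-p_k))$, reflecting the geometric redistribution of each stale anchor through subsequent averaging rounds; only positivity matters for the theorem, but the discrepancy shows the averaging dynamics have not actually been resolved.) The paper closes this gap by augmenting the state to the vector $\bar\Lambda_i(\theta)$ of all agents' expected log-belief ratios together with the server's, obtaining the linear recursion $\bar\Lambda_i(\theta)=\rmR\,\bar\Lambda_{i-1}(\theta)+\rmU\rmd(\theta)$ with $\rmR$ a primitive row-stochastic matrix, and reading off the common linear growth rate from the Perron eigenvector. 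To complete your pathwise version you would need either this augmented-state device (with a strong law for products of the corresponding i.i.d.\ random stochastic matrices) or an explicit accounting of how each block of observations is weighted in $\bS_i$ after repeated averaging; as written, the key step is a plan rather than a proof.
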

\begin{proof}
    Appendix~\ref{appendix:pre_int}.
\end{proof}
The causal influence of an agent $m$ on the joint decision is characterized by the shift of the overall decisions between pre and post-interventions. Therefore, we proceed to examine the beliefs under an intervention on agent $m$. We first review the causal impact result from \cite{kayaalp2023causal}, which addresses synchronous communication.

\begin{theorem}[\textbf{Synchronous collaboration \cite{kayaalp2023causal}}]\label{prop:sync_collab} Under synchronous collaboration described in Sec.~\ref{sec:fed_inference}, the expected log-belief ratio under intervention is given by
\begin{equation}\label{eq:sync_lambda}
  \widetilde{\lambda}_{\infty}(\theta) = \frac{1}{\pi_m} \sum_{k \neq m} \pi_{k} d_{k} (\theta) +  \log \frac{\mu_{m}(\theta^{\circ})}{\mu_{m}(\theta)}
\end{equation}
Therefore, by \eqref{eq:cm_general_def},  the causal impact of agent $m$ on the joint decision is 
\begin{align}\label{eq:sync_cm}
      C_m \!= \!1\!-\dfrac{1}{1 + \!\!\sum\limits_{\theta \neq \theta^\circ} \dfrac{\mu_{m}(\theta)}{\mu_{m}(\theta^\circ)}  \exp \Big \{ \!- \!\dfrac{1}{\pi_m} \!\sum\limits_{k \neq m} \pi_{k} d_{k} (\theta) \! \Big \}}
\end{align}
\end{theorem}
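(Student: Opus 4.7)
The plan is to work in the space of log-belief ratios, where the geometric combination at the fusion center becomes linear, and then to derive a scalar linear recursion whose fixed point is exactly the claimed $\widetilde{\lambda}_\infty(\theta)$. First I would take the ratio of $\widetilde{\bmu}_i(\theta^\circ)$ to $\widetilde{\bmu}_i(\theta)$ from the combination rule \eqref{eq:geometric_fusion} (applied to the post-intervention beliefs) and take logarithms. Because the normalizing constants cancel in the ratio, this yields
\begin{equation}
\widetilde{\bS}_{i}(\theta) = \sum_{k=1}^K \pi_k \log \frac{\widetilde{\bpsi}_{k,i}(\theta^\circ)}{\widetilde{\bpsi}_{k,i}(\theta)}.
\end{equation}
For every $k \neq m$, the adaptation rule \eqref{eq:dif_adapt_step} gives $\log \frac{\widetilde{\bpsi}_{k,i}(\theta^\circ)}{\widetilde{\bpsi}_{k,i}(\theta)} = \log \frac{L_k(\bxi_{k,i}|\theta^\circ)}{L_k(\bxi_{k,i}|\theta)} + \widetilde{\bS}_{i-1}(\theta)$, while for $k = m$ the intervention fixes the ratio at the constant $\log \frac{\mu_m(\theta^\circ)}{\mu_m(\theta)}$, independent of the past. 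Using $\sum_{k \neq m} \pi_k = 1 - \pi_m$, this collapses to the one-step recursion
\begin{equation}
\widetilde{\bS}_i(\theta) = (1-\pi_m)\,\widetilde{\bS}_{i-1}(\theta) + \sum_{k \neq m} \pi_k \log \frac{L_k(\bxi_{k,i}|\theta^\circ)}{L_k(\bxi_{k,i}|\theta)} + \pi_m \log \frac{\mu_m(\theta^\circ)}{\mu_m(\theta)}.
\end{equation}

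Next I would take expectations on both sides. Since observations are drawn under the true hypothesis, $\e\bigl[\log \frac{L_k(\bxi_{k,i}|\theta^\circ)}{L_k(\bxi_{k,i}|\theta)}\bigr] = d_k(\theta)$, and the i.i.d.\ structure in time lets this term become a constant drift. The resulting deterministic recursion on $\e[\widetilde{\bS}_i(\theta)]$ is an affine map with contraction factor $1-\pi_m \in (0,1)$, so it converges geometrically to its unique fixed point. Solving the fixed-point equation and dividing through by $\pi_m$ immediately yields
\begin{equation}
\widetilde{\lambda}_\infty(\theta) = \frac{1}{\pi_m} \sum_{k \neq m} \pi_k d_k(\theta) + \log \frac{\mu_m(\theta^\circ)}{\mu_m(\theta)},
\end{equation}
which is \eqref{eq:sync_lambda}. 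The causal-impact formula \eqref{eq:sync_cm} then follows by direct substitution into the softmax-type identity \eqref{eq:expected_log_belief_trans} for $\widetilde{\mu}_\infty(\theta^\circ)$ and applying $C_m = 1 - \widetilde{\mu}_\infty(\theta^\circ)$.

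There is not much of a serious obstacle here, since the intervention severs agent $m$'s dependence on the past and makes the log-domain dynamics scalar and linear. The only items requiring mild care are (i) justifying the interchange of expectation and limit, which follows from the finiteness of the KL divergences $d_k(\theta)$ together with the contractive coefficient $1-\pi_m < 1$ ensuring geometric convergence and uniform integrability, and (ii) confirming that $\widetilde{\lambda}_\infty(\theta)$ as defined in \eqref{eq:expected_log_belief_trans} indeed coincides with the fixed point of the expected recursion. Both are standard and follow from the assumptions already invoked in Theorem~\ref{theorem:no_intervention}, so the heart of the argument is really just the log-linearization and the geometric-series solution of the affine recursion.
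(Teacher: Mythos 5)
Your proof is correct and follows essentially the same route as the paper: the paper imports this result from \cite{kayaalp2023causal} and recovers it as the $p_k \to 1$ special case of Theorem~\ref{theorem:asymmetric}, whose proof in Appendix~\ref{appendix:asymmetric} uses exactly your log-linearization into the affine recursion $\widetilde{\blambda}_i(\theta) = (1-\pi_m)\widetilde{\blambda}_{i-1}(\theta) + \sum_{k\neq m}\pi_k \bx_{k,i}(\theta) + \pi_m c$ followed by taking expectations and passing to the fixed point. No gaps.
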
 \qed

\noindent Equations \eqref{eq:sync_lambda} and \eqref{eq:sync_cm} imply that:
\begin{itemize}
\item  An increase in the confidence $\pi_m$ by the fusion center increases the causal impact of agent $m$.
\item  Increasing the informativeness and confidence weights of the other agents decreases the impact of agent $m$.
\end{itemize}
Also, observe that \eqref{eq:sync_lambda} and \eqref{eq:sync_cm} are dependent on the intervention strength $\mu_m$. This is typical for interventions on continuous valued variables \cite{peters2017elements}. To have an intervention-dose independent causal impact measure, setting $\mu_m$ to a uniform belief is discussed in \cite{kayaalp2023causal}. Specifically, reference \cite{kayaalp2023causal} shows that setting the log-belief ratio $\log \frac{\mu_{m}(\theta^{\circ})}{\mu_{m}(\theta)}$ to $0$ is equivalent to the causal derivative effects discussed in \cite{peters2017elements}. Next, we consider the causal influences for the asynchronous scenarios we have introduced in Sec.~\ref{sec:async_behavior}.

\begin{theorem}[\textbf{Asymmetric communication}]\label{theorem:asymmetric}
    Under the asymmetric communication protocol described in Sec.~\ref{sec:async_behavior}, the expected log-belief ratio under intervention is given by
\begin{align}\label{eq:asymmetric_th_lambda}
\widetilde{\lambda}_{\infty}(\theta) &= \frac 1 {\pi_m} \sum_{k \neq m} \pi_{k} p_{k} d_{k} (\theta) +  p_m \log \frac{\mu_{m}(\theta^{\circ})}{\mu_{m}(\theta)} 
\end{align}
This implies by \eqref{eq:cm_general_def} that the causal effect of agent $m$ on the joint decision is given by
\begin{align}
      C_m \!=\! 1\!-\!\dfrac{1}{1 + \!\sum\limits_{\theta \neq \theta^\circ} \left(\dfrac{\mu_{m}(\theta)}{\mu_{m}(\theta^\circ)}\right)^{p_m}  \!\!\!\!\!\exp \Big \{ \!- \!\dfrac{1}{\pi_m} \sum\limits_{k \neq m} \pi_{k} p_k d_{k} (\theta) \! \Big \}}
\end{align}
\end{theorem}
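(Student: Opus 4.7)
\noindent The plan is to follow the proof strategy of Theorem~\ref{prop:sync_collab} closely, while carefully tracking the Bernoulli participation variables $\{\bq_{k,i}\}$ throughout. The first step is to apply the asymmetric combination rule \eqref{eq:geometric_fusion_c1} under the intervention $\bpsi_{m,i}:=\mu_m$ and take the log-ratio $\widetilde{\bS}_i(\theta) = \log \tfrac{\widetilde{\bmu}_i(\theta^\circ)}{\widetilde{\bmu}_i(\theta)}$ of both sides, converting the geometric combination into an additive stochastic recursion. Plugging in the local adaptation step \eqref{eq:dif_adapt_step} (which is unchanged in the asymmetric protocol) for non-intervened agents $k\neq m$ and the fixed pmf $\mu_m$ for the intervened agent, and using $\sum_k \pi_k = 1$ to collapse the ``prior-filling'' $\widetilde{\bmu}_{i-1}$ contributions, yields a compact linear recursion of the schematic form
\begin{align*}
\widetilde{\bS}_i(\theta) &= (1 - \pi_m \bq_{m,i})\,\widetilde{\bS}_{i-1}(\theta) + \pi_m \bq_{m,i}\log\tfrac{\mu_m(\theta^\circ)}{\mu_m(\theta)} \\
&\quad + \sum_{k\neq m} \pi_k \bq_{k,i}\log\tfrac{L_k(\bxi_{k,i}|\theta^\circ)}{L_k(\bxi_{k,i}|\theta)}.
\end{align*}

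The second step is to take expectations on both sides. Since $\{\bq_{k,i}\}$ is i.i.d.\ over time (and independent of the observation processes), $\bq_{m,i}$ is independent of $\widetilde{\bS}_{i-1}(\theta)$, and $\e\log\tfrac{L_k(\bxi_{k,i}|\theta^\circ)}{L_k(\bxi_{k,i}|\theta)} = d_k(\theta)$ by the definition of $d_k$. The mean sequence $x_i \triangleq \e[\widetilde{\bS}_i(\theta)]$ therefore satisfies a deterministic scalar affine recursion of the form $x_i = (1-\pi_m p_m)\,x_{i-1} + c$, where the constant $c$ aggregates the remaining drift terms. Because $0 < \pi_m p_m < 1$ and $x_0$ is finite (from the full-support initial belief and finite-KL assumptions), this is a geometric contraction with a unique stable fixed point $\widetilde{\lambda}_\infty(\theta)$, which I would then substitute into \eqref{eq:expected_log_belief_trans} and \eqref{eq:cm_general_def} to recover the closed-form expression for $C_m$ stated in the theorem.

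I expect the main obstacle to be the algebraic bookkeeping in the first step: one must verify that the ``prior-filling'' terms contributed by the non-participating agents (those with $\bq_{k,i}=0$) combine cleanly with the log-ratios of the participating ones to produce the single effective contraction factor $(1-\pi_m \bq_{m,i})$ multiplying $\widetilde{\bS}_{i-1}(\theta)$, with the remaining cross terms cancelling via $\sum_k \pi_k = 1$. Once this simplification is in place, the rest is standard: independence between the participation and observation processes justifies the expectation computation, the affine fixed point is immediate from $|1-\pi_m p_m|<1$, and the passage from $\widetilde{\lambda}_\infty(\theta)$ to $C_m$ parallels the corresponding algebraic manipulation in Theorem~\ref{prop:sync_collab} verbatim.
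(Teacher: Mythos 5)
Your overall strategy is the same as the paper's: convert the geometric fusion rule into an additive recursion for the log-belief ratio, take expectations using the independence of $\{\bq_{k,i}\}$, and read off the fixed point of the resulting affine map. The problem is that the recursion you derive does not produce the formula the theorem asserts. Summing the per-agent contributions exactly as you set them up---$\widetilde{\bS}_{i-1}(\theta)+\bq_{k,i}\log\tfrac{L_k(\bxi_{k,i}|\theta^\circ)}{L_k(\bxi_{k,i}|\theta)}$ for $k\neq m$, and $(1-\bq_{m,i})\widetilde{\bS}_{i-1}(\theta)+\bq_{m,i}\log\tfrac{\mu_m(\theta^\circ)}{\mu_m(\theta)}$ for the intervened agent---does indeed give the coefficient $(1-\pi_m\bq_{m,i})$ multiplying $\widetilde{\bS}_{i-1}(\theta)$, whose mean is $1-\pi_m p_m$. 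But the fixed point of $x_i=(1-\pi_m p_m)x_{i-1}+\sum_{k\neq m}\pi_k p_k d_k(\theta)+\pi_m p_m\log\tfrac{\mu_m(\theta^\circ)}{\mu_m(\theta)}$ is
\begin{equation*}
\frac{1}{\pi_m p_m}\sum_{k\neq m}\pi_k p_k d_k(\theta)+\log\frac{\mu_m(\theta^\circ)}{\mu_m(\theta)},
\end{equation*}
which coincides with the right-hand side of \eqref{eq:asymmetric_th_lambda} only when $p_m=1$; in general the theorem's expression is this quantity scaled by $p_m$. So the concluding claim of your second step---that substituting your fixed point ``recovers the closed-form expression stated in the theorem''---does not hold, and the proof does not close.

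The paper's proof gets to \eqref{eq:asymmetric_th_lambda} because its aggregation step \eqref{eq:lbr_update_two_way} uses the deterministic contraction coefficient $(1-\pi_m)$ rather than $(1-\pi_m\bq_{m,i})$: in effect it drops the $\pi_m(1-\bq_{m,i})\wblambda_{i-1}(\theta)$ prior-filling term, so the intervened agent contributes $\bq_{m,i}\log\tfrac{\mu_m(\theta^\circ)}{\mu_m(\theta)}$ and nothing at all when idle. With that coefficient the fixed point is exactly \eqref{eq:asymmetric_th_lambda}. Be aware that your $(1-\pi_m\bq_{m,i})$ is the faithful consequence of the stated combination rule \eqref{eq:geometric_fusion_c1} together with the stated contribution \eqref{eq:two_way_contr_inter}; it is also the version under which $\widetilde{\lambda}_\infty(\theta)\to\infty$ and $C_m\to 0$ as $p_m\to 0$, the sanity check the paper itself invokes in the symmetric case, whereas \eqref{eq:asymmetric_th_lambda} keeps $C_m$ bounded away from $0$ for a never-participating agent. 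The discrepancy is therefore substantive: either you adopt the paper's $(1-\pi_m)$ coefficient and justify why the idle intervened agent's slot is not filled with the FC prior, or you accept that your recursion yields a constant different from the one in the statement. Either way, as a proof of the theorem as stated, your proposal has a gap precisely at the step you flagged as ``algebraic bookkeeping.''
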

\begin{proof}
    Appendix~\ref{appendix:asymmetric}.
\end{proof}
Notice in Theorem~\ref{theorem:asymmetric} that as $p_k$ approaches 1 for each agent $k$, i.e., when all agents participate synchronously at each iteration, we recover Theorem~\ref{prop:sync_collab}. Also notice that the essential difference from the synchronous scenario is the replacement of confidence weights $\pi_k$ by $\pi_k p_k$. This is intuitive since more participation by an agent is expected to increase its influence on the joint decision, as if it had a higher confidence from the server. Similarly, more participation by the other agents decreases the overall impact of an agent on the joint decision, as the ``relative'' participation of the agent is decreasing compared to the others.
\begin{theorem}[\textbf{Symmetric communication}]\label{theorem:symmetric} Under the symmetric communication protocol described in Sec.~\ref{sec:async_behavior} the expected log-belief ratio under intervention is given by
\begin{equation}\label{eq:symmetric_th_lambda}
    \widetilde{\lambda}_{\infty} (\theta) = \dfrac{1}{\pi_m p_m}\sum\limits_{k \neq m} \dfrac{\pi_k d_k (\theta)}{1-\pi_k (1-p_k)}+ \log \dfrac{\mu_{m}(\theta^{\circ})}{\mu_{m}(\theta)} 
\end{equation}    
This implies by \eqref{eq:cm_general_def} that the causal effect of agent $m$ on the joint decision is given by
\begin{align}
      C_m \!\!= 1\!-\!\dfrac{1}{1 + \!\!\sum\limits_{\theta \neq \theta^\circ} \dfrac{\mu_{m}(\theta)}{\mu_{m}(\theta^\circ)}  \exp \Big \{ \!  \dfrac{-1}{\pi_m p_m}\!\sum\limits_{k \neq m} \dfrac{\pi_k d_k (\theta)}{1-\pi_k (1-p_k)} \Big \}}
\end{align}
\end{theorem}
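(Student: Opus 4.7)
The plan is to mirror the log-belief-ratio analysis used in Theorems~\ref{theorem:no_intervention} and~\ref{theorem:asymmetric}, but with one additional conditioning argument that captures the feedback of the participation variables $\bq_{k,\cdot}$ into the agent's own prior. Define $\bx_{k,i}(\theta) = \log[L_k(\bxi_{k,i}|\theta^\circ)/L_k(\bxi_{k,i}|\theta)]$ (so $\e[\bx_{k,i}(\theta)] = d_k(\theta)$) and $\widetilde{\bL}_{k,i}(\theta) = \log[\widetilde{\bpsi}_{k,i}(\theta^\circ)/\widetilde{\bpsi}_{k,i}(\theta)]$. Taking logarithms of \eqref{eq:dif_adapt_step_c2} and \eqref{eq:geometric_fusion_c1} yields, for $k\neq m$,
\begin{equation}
\widetilde{\bL}_{k,i} = \bx_{k,i} + \bq_{k,i-1}\widetilde{\bS}_{i-1} + (1-\bq_{k,i-1})\widetilde{\bL}_{k,i-1},
\end{equation}
\begin{equation}
\widetilde{\bS}_i = \sum_{k=1}^K \pi_k\bigl[\bq_{k,i}\widetilde{\bL}_{k,i} + (1-\bq_{k,i})\widetilde{\bS}_{i-1}\bigr],
\end{equation}
while the intervention pins $\widetilde{\bL}_{m,i}\equiv\log[\mu_m(\theta^\circ)/\mu_m(\theta)]$. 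Let $\alpha_k = \lim_i \e[\widetilde{\bL}_{k,i}(\theta)]$ for $k\neq m$ and $\widetilde{\lambda} = \widetilde{\lambda}_\infty(\theta)$; existence follows from a stability argument on the linear expectation dynamics, paralleling \cite{kayaalp2023causal} and the asymmetric case. Using that $\bq_{k,i}$ is independent of every quantity with time index strictly less than $i$, taking expectations of the combination step produces the steady-state balance
\begin{equation}\label{eq:sketch_balance}
\widetilde{\lambda}\sum_{k}\pi_k p_k = \pi_m p_m \log\tfrac{\mu_m(\theta^\circ)}{\mu_m(\theta)} + \sum_{k\neq m}\pi_k p_k \alpha_k.
\end{equation}

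The main obstacle, and the reason the answer differs in shape from Theorem~\ref{theorem:asymmetric}, is that in the symmetric protocol $\bq_{k,i-1}$ and $\widetilde{\bS}_{i-1}$ are correlated: the $k$-th summand of $\widetilde{\bS}_{i-1}$ itself contains $\bq_{k,i-1}$. I would sidestep this by unrolling the adaptation through the last participation time $\tau_{k,i} = \max\{j\leq i-1 : \bq_{k,j}=1\}$, which collapses the recursion into
\begin{equation}
\widetilde{\bL}_{k,i} = \widetilde{\bS}_{\tau_{k,i}} + \sum_{j=\tau_{k,i}+1}^{i}\bx_{k,j}.
\end{equation}
The length $i-\tau_{k,i}$ is geometric with parameter $p_k$ and independent of the observations, so the empirical sum contributes $d_k(\theta)/p_k$ in expectation. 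The delicate piece is $\e[\widetilde{\bS}_{\tau_{k,i}}]$: conditioning on $\tau_{k,i}=t$ is equivalent to conditioning on $\bq_{k,t}=1$ (the future $\bq$-values are independent of $\widetilde{\bS}_t$), and expanding $\widetilde{\bS}_t$ via the combination rule, using $\bq_{k,t}^2=\bq_{k,t}$ and the independence of $\bq_{k,t}$ from every $j\neq k$ summand, gives in the limit
\begin{equation}
\e[\widetilde{\bS}_t\mid\bq_{k,t}=1]\longrightarrow \widetilde{\lambda}+\pi_k(1-p_k)(\alpha_k-\widetilde{\lambda}).
\end{equation}
It is precisely this $\pi_k(1-p_k)$ correction that is responsible for the $1-\pi_k(1-p_k)$ denominator appearing in \eqref{eq:symmetric_th_lambda}.

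Putting these observations together, $\alpha_k$ satisfies $\alpha_k = d_k(\theta)/p_k + \widetilde{\lambda} + \pi_k(1-p_k)(\alpha_k-\widetilde{\lambda})$, so $\alpha_k = \widetilde{\lambda} + d_k(\theta)/\bigl(p_k[1-\pi_k(1-p_k)]\bigr)$ for $k\neq m$. Substituting back into \eqref{eq:sketch_balance}, the common $\widetilde{\lambda}\sum_{k\neq m}\pi_k p_k$ terms cancel on both sides, leaving $\pi_m p_m \widetilde{\lambda}$ on the left and $\pi_m p_m \log[\mu_m(\theta^\circ)/\mu_m(\theta)] + \sum_{k\neq m}\pi_k d_k(\theta)/[1-\pi_k(1-p_k)]$ on the right, which is exactly \eqref{eq:symmetric_th_lambda}. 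Plugging the resulting $\widetilde{\lambda}_\infty(\theta)$ into \eqref{eq:expected_log_belief_trans} and then \eqref{eq:cm_general_def} yields the stated expression for $C_m$.
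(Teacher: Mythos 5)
Your derivation is correct and reaches exactly \eqref{eq:symmetric_th_lambda}, but by a genuinely different route than the paper. The paper assembles the $(K+1)$-dimensional linear recursion $\bar\Lambda_i(\theta)=\rmR\bar\Lambda_{i-1}(\theta)+\rmU\rmd(\theta)$, deletes the intervened row and column, and computes the last row of $(\rmI-\widetilde{\rmR})^{-1}$ explicitly via the Schur complement and the rank-one matrix inversion formula, after which several lines of algebra collapse the denominator to $\pi_m p_m$. You instead reduce everything to a scalar balance at the server, \eqref{eq:sketch_balance}, together with one fixed-point equation per agent, and you resolve the only genuine difficulty --- the correlation between $\bq_{k,t}$ and $\widetilde{\bS}_{t}$ through the $k$-th summand of the combination rule --- by the explicit conditional-expectation correction $\e[\widetilde{\bS}_t\mid\bq_{k,t}=1]-\e[\widetilde{\bS}_t]=\pi_k(1-p_k)(\e[\widetilde{\bL}_{k,t}]-\e[\widetilde{\bS}_{t-1}])$, which I verified is exact. (The last-renewal unrolling through $\tau_{k,i}$ is a nice way to see the $d_k(\theta)/p_k$ term, though you could equally obtain the same fixed-point equation $\alpha_k=d_k(\theta)/p_k+\widetilde{\lambda}+\pi_k(1-p_k)(\alpha_k-\widetilde{\lambda})$ by applying the conditioning identity directly to $\e[\bq_{k,i-1}\widetilde{\bS}_{i-1}]$ in the adaptation recursion.) What your approach buys is transparency: it pinpoints the positive correlation between an agent's participation indicator and the server belief as the sole source of the $1-\pi_k(1-p_k)$ denominator, and it avoids matrix inversion entirely; what the paper's approach buys is that stability comes for free, since $\widetilde{\rmR}$ is a primitive substochastic matrix with spectral radius below one once agent $m$'s row and column are removed. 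That stability is the one point you assert rather than prove --- your fixed-point equations presuppose that $\alpha_k$ and $\widetilde{\lambda}$ exist and are finite, which requires $\pi_m p_m>0$ and an argument along the lines of the paper's spectral-radius observation --- but this is a minor gap in an otherwise sound and arguably more illuminating proof.
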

\begin{proof}
    Appendix~\ref{appendix:symmetric}.
\end{proof}
Similar to the asymmetric communication scenario in Theorem~\ref{theorem:asymmetric}, as $p_k \to 1$ for all agents, Theorem~\ref{theorem:symmetric} recovers the synchronous collaboration result of Theorem~\ref{prop:sync_collab}. Furthermore, as $p_m \to 0$, notice that $\widetilde{\lambda}_{\infty} (\theta) \to \infty$, which in turn implies $C_m \to 0$. In other words, if an agent does not participate in the decision making, it does not have any impact on the decision.

Next, we compare the causal impacts of agents under both asymmetric and symmetric communication schemes, given the same participation $\{p_k\}_{k=1}^K$, confidence weight $\{\pi_k\}_{k=1}^K$, and informativeness parameters $\{d_k(\theta)\}_{k=1}^K$.
\begin{corollary}[\textbf{Comparison of asynchronous scenarios}]\label{corollary:comparison}
    Agent \(m\) exerts a stronger causal impact on the joint decision in the symmetric scenario compared to the asymmetric scenario if the misinformation strength (i.e., intervened belief) satisfies 
    \begin{equation}\label{eq:th1_2_comp_cond}
    \log \dfrac{\mu_m (\theta)}{\mu_m (\theta^\circ)} \geq \sum_{k \neq m} \dfrac{\pi_k d_k (\theta)}{\pi_m (1-p_m)} \Big ( \dfrac{1}{p_m (1- \pi_k (1- p_k))} - p_k \Big )
\end{equation}
\end{corollary}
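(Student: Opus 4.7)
My plan is to reduce the corollary to a pointwise comparison of the expected asymptotic log-belief ratios $\wlambda_\infty(\theta)$ under the two protocols, and then to verify that the stated inequality \eqref{eq:th1_2_comp_cond} is exactly what this pointwise comparison yields after algebraic rearrangement. The first step is to observe that by \eqref{eq:cm_general_def} and \eqref{eq:expected_log_belief_trans},
\begin{equation*}
C_m \;=\; \frac{\sum_{\theta \neq \theta^\circ} \exp\{-\wlambda_\infty(\theta)\}}{1+\sum_{\theta \neq \theta^\circ} \exp\{-\wlambda_\infty(\theta)\}},
\end{equation*}
which is strictly decreasing in each $\wlambda_\infty(\theta)$. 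Consequently, a sufficient condition for $C_m^{\text{sym}} \geq C_m^{\text{asym}}$ is that $\wlambda_\infty^{\text{sym}}(\theta) \leq \wlambda_\infty^{\text{asym}}(\theta)$ for every $\theta \neq \theta^\circ$.

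The second step is purely algebraic. I substitute the closed-form expressions from Theorems~\ref{theorem:asymmetric} and~\ref{theorem:symmetric} into this pointwise inequality. The log-prior contributions collect on one side as $(1-p_m)\log\frac{\mu_m(\theta^\circ)}{\mu_m(\theta)}$, while the informativeness sums collect on the other side as $\sum_{k\neq m}\frac{\pi_k d_k(\theta)}{\pi_m}\bigl(p_k - \frac{1}{p_m(1-\pi_k(1-p_k))}\bigr)$. Multiplying both sides by $-1$ to flip the direction of the inequality and then dividing by the strictly positive quantity $1-p_m$ delivers exactly \eqref{eq:th1_2_comp_cond}.

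The computation is essentially bookkeeping, so there is no deep obstacle; the main things to check carefully are the direction of the inequality after the multiplication by $-1$ and the sign of $1-p_m$ (which is positive under any genuinely asynchronous participation with $p_m<1$). A minor conceptual caveat is that the pointwise condition $\wlambda_\infty^{\text{sym}}(\theta) \leq \wlambda_\infty^{\text{asym}}(\theta)$ is slightly stronger than the aggregate inequality defining $C_m^{\text{sym}} \geq C_m^{\text{asym}}$, so the corollary provides only a sufficient condition, not a necessary one; this is consistent with the ``if'' phrasing in the statement. It is also worth noting, as a sanity check, that the coefficient $\tfrac{1}{p_m(1-\pi_k(1-p_k))}-p_k$ multiplying $\pi_k d_k(\theta)$ is always non-negative, since $p_m p_k(1-\pi_k(1-p_k)) \leq 1$, so the right-hand side of \eqref{eq:th1_2_comp_cond} is non-negative and the condition has the intuitive reading that the symmetric protocol amplifies agent $m$'s influence only when the intervention assigns sufficient mass to the false hypothesis.
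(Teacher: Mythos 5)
Your proposal is correct and follows essentially the same route as the paper's own proof: compare the two closed-form expressions for $\widetilde{\lambda}_{\infty}(\theta)$ pointwise, note that $C_m$ is decreasing in $\widetilde{\lambda}_{\infty}(\theta)$, and rearrange to obtain \eqref{eq:th1_2_comp_cond}. The paper states this in one sentence; you simply make the bookkeeping explicit, and your added observations (positivity of $1-p_m$, non-negativity of the coefficient, and the sufficiency-only caveat for multiple hypotheses) are all accurate.
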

\begin{proof}
    Notice from \eqref{eq:asymmetric_th_lambda} and \eqref{eq:symmetric_th_lambda} if agent \(m\) meets condition \eqref{eq:th1_2_comp_cond},
then the \(\widetilde{\lambda}_{\infty} (\theta)\) term in \eqref{eq:asymmetric_th_lambda} exceeds that in \eqref{eq:symmetric_th_lambda}.
 The result then follows by definition \eqref{eq:expected_log_belief_trans}, since $\widetilde{\lambda}_{\infty} (\theta)$ is inversely proportional to the causal impact $C_m$.
\end{proof}

Corollary~\ref{corollary:comparison} holds significant relevance for practical applications. For our problem setting, we can define misinformation as the ratio of the belief on wrong hypothesis to true hypothesis, i.e., $\frac{\mu_m (\theta)}{\mu_m (\theta^\circ)}$.
Commonly, if misinformation is originating from \textit{malfunctioning} agents, it is moderate. In contrast, \textit{malicious} agents often supply adversarial misinformation that can be extreme. This suggests that the symmetric communication scenario is more vulnerable to highly outlying information potentially caused by adversarial agents, while asymmetric communication is more sensitive to moderate level misinformation that typically emerges from malfunctioning agents without harmful intentions. Furthermore, for a fair decision-making process that aims to account for all agents while remaining resilient against adversarial threats, asymmetric communication appears to be better in comparison to the symmetric case. This is because it allocates greater causal weight to moderate deviations from the nominal belief state while also reducing the influence of extreme misinformation, providing a safeguard against adversarial attacks.

\section{Experimental Results}\label{sec:numerical}

\subsection{Synthetic data}

To illustrate our theoretical results, we first consider a binary hypothesis testing problem with \(K=12\) agents connected to a FC, each receiving observations that follow a Gaussian distribution. In other words, two possible hypotheses underlie streaming data with same variance Gaussian distributions but different means. Under the null hypothesis, the mean for all agents is assumed to be \(0\), while under the alternative hypothesis, it is \(0.5\) for odd-indexed agents and \(1\) for even-indexed agents. The probability of participation \(p_k\), which is defined in \eqref{eq:qki_pk_def} is set to \(0.8\) for each agent $k$ with indices $1-3$, to \(0.6\) for agent indices $4-6$, \(0.4\) for agent indices $7-9$, and \(0.2\) for agent indices $10-12$. Furthermore, the confidence weight \(\pi_k\) assigned by the server to each agent \(k\) is \(0.125\) for agent indices $1-4$, \(0.075\) for agent indices $5-8$, and \(0.05\) for agent indices $9-12$, ensuring that the sum of all weights across the $K=12$ agents equals 1.

\begin{figure}[ht]
	\centering
	{%
       \includegraphics[width=0.93\linewidth]{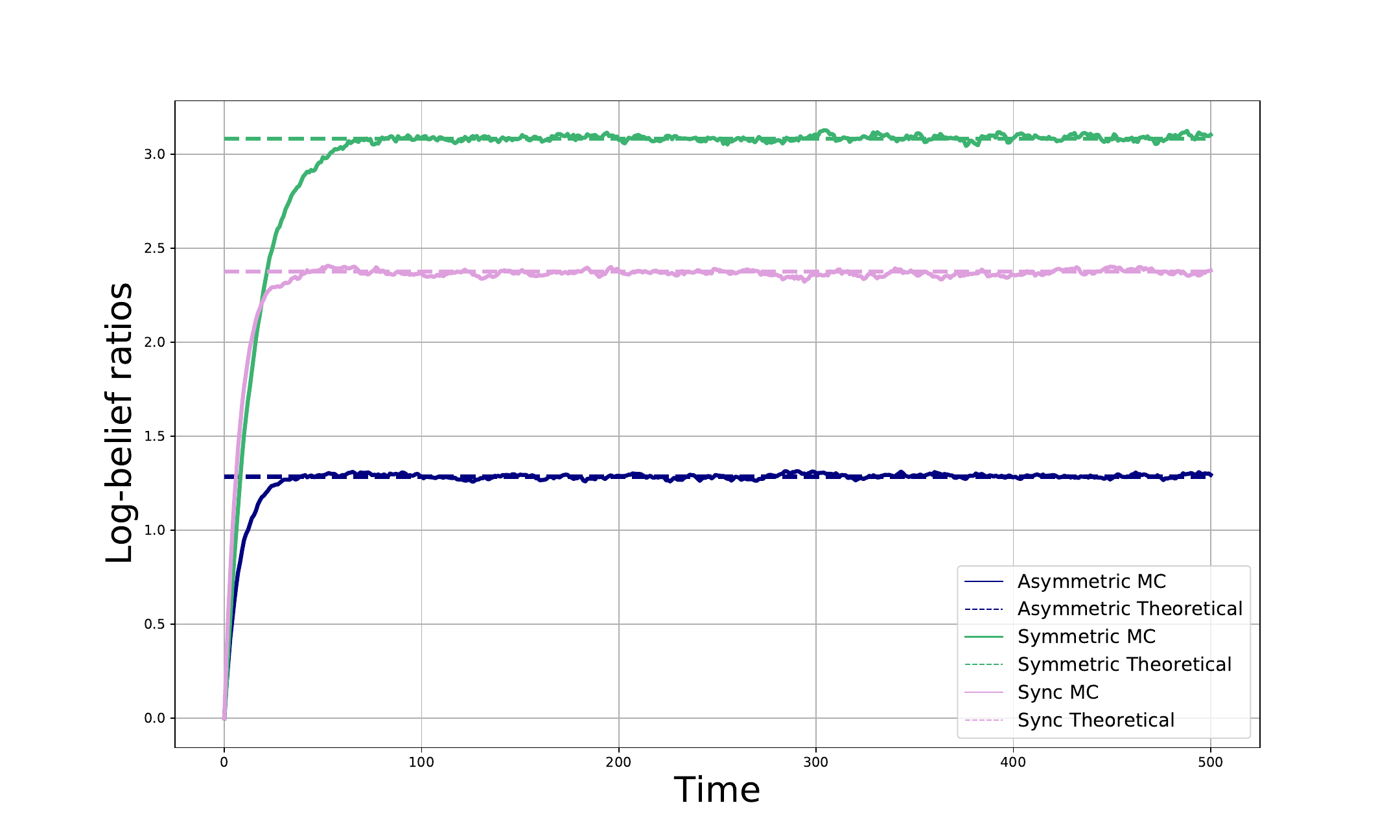}}
	\caption{\small Simulated log-belief ratios averaged over 1000 Monte Carlo (MC) simulations and theoretical expressions over time overlap with each other, verifying the derivations in Theorems~\ref{prop:sync_collab}--\ref{theorem:symmetric}.}
 \label{fig:lbr_time}
\end{figure}

In the first experiment, we average \(1000\) simulations for three settings: the synchronous setting from Sec.~\ref{sec:fed_inference}, and the asymmetric and symmetric settings from Sec.~\ref{sec:async_behavior}. This is performed under an intervention on agent \(m=1\) with uniform beliefs ($\mu_m (\theta) = 0.5$). We plot the evolution of log-belief ratios over \(500\) time instants in Fig.~\ref{fig:lbr_time}, as well as the derived expressions for these values from Theorems \ref{prop:sync_collab}, \ref{theorem:asymmetric}, and \ref{theorem:symmetric}. Notice that the simulated log-belief ratios verify the derived analytical results since they closely align with the theoretical expressions.

\begin{figure}[ht]
	\centering
	{%
        \includegraphics[width=0.88\linewidth]{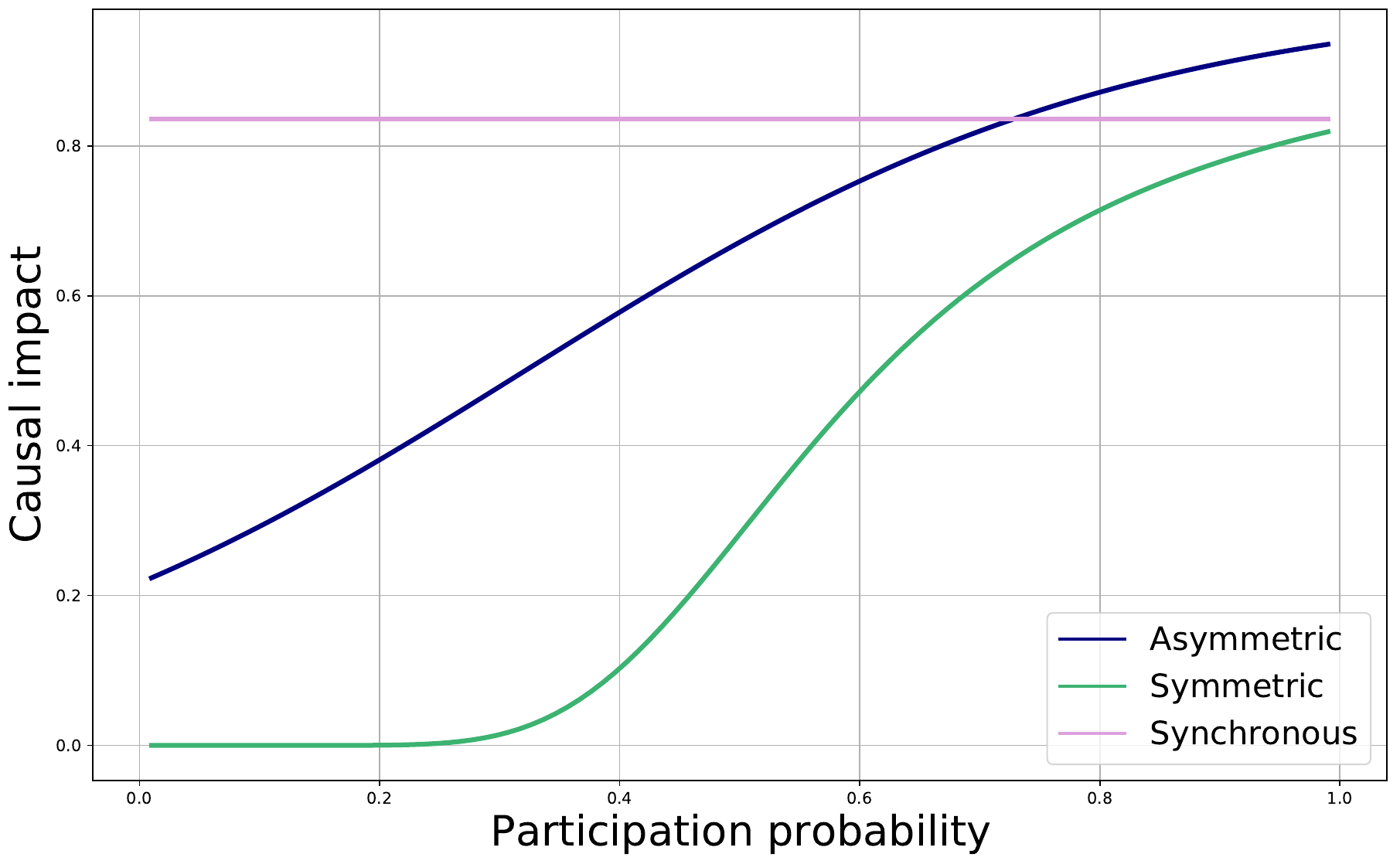}}
	\caption{\small Causal impact of agent $m=1$ on the joint decision with changing participation probability $p_m$. Note that $p_m$ is constant for the synchronous case, and the corresponding constant line is also provided in the plot for comparison purposes.}
    \label{fig:cm_pm}
\end{figure}

In Fig.~\ref{fig:cm_pm}, we illustrate the causal impact of agent \(m=1\) on the joint decision with respect to changing participation probability \(p_m\). We also include the synchronous setting where all agents participate with probability $\{p_k\}_{k=1}^K = 1$ as a reference. It is evident from this figure that increasing the frequency of information transmission by an agent increases its impact on the collaborative decision.

\begin{figure}[ht]
	\centering
	{%
       \includegraphics[width=0.9\linewidth]{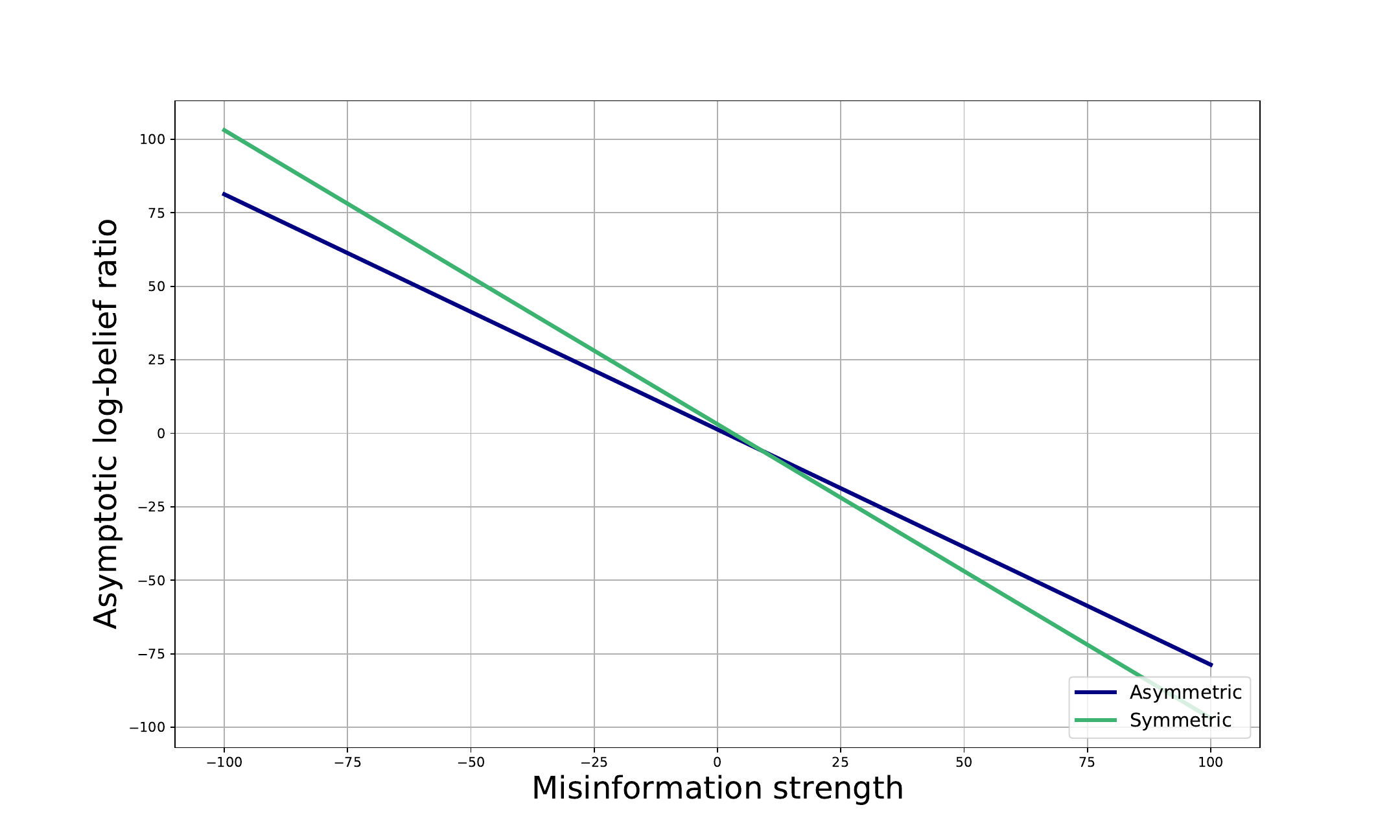}}
	\caption{\small Asymptotic log-belief ratio with respect to misinformation strength the \(\log \dfrac{\mu_m (\theta)}{\mu_m (\theta^\circ)}\), verifying the derived threshold in Corollary~\ref{corollary:comparison}.}
	\label{fig:lbr_misinfo}
\end{figure}

Next, in Fig.~\ref{fig:lbr_misinfo}, we plot the asymptotic log-belief ratios in relation to varying intervention strengths \(\log \frac{\mu_m (\theta)}{\mu_m (\theta^\circ)}\) on agent \(m=1\). Supporting our theoretical result in \eqref{eq:th1_2_comp_cond}, the log-belief ratio in the asymmetric setting surpasses the one in the symmetric setting when the misinformation strength exceeds a certain threshold. As discussed before, this means that under conditions of high misinformation supply, the asymmetric communication framework assigns a relatively smaller causal impact compared to the symmetric communication framework.

\begin{figure}[ht]
	\centering
	\includegraphics[width=.97\linewidth]{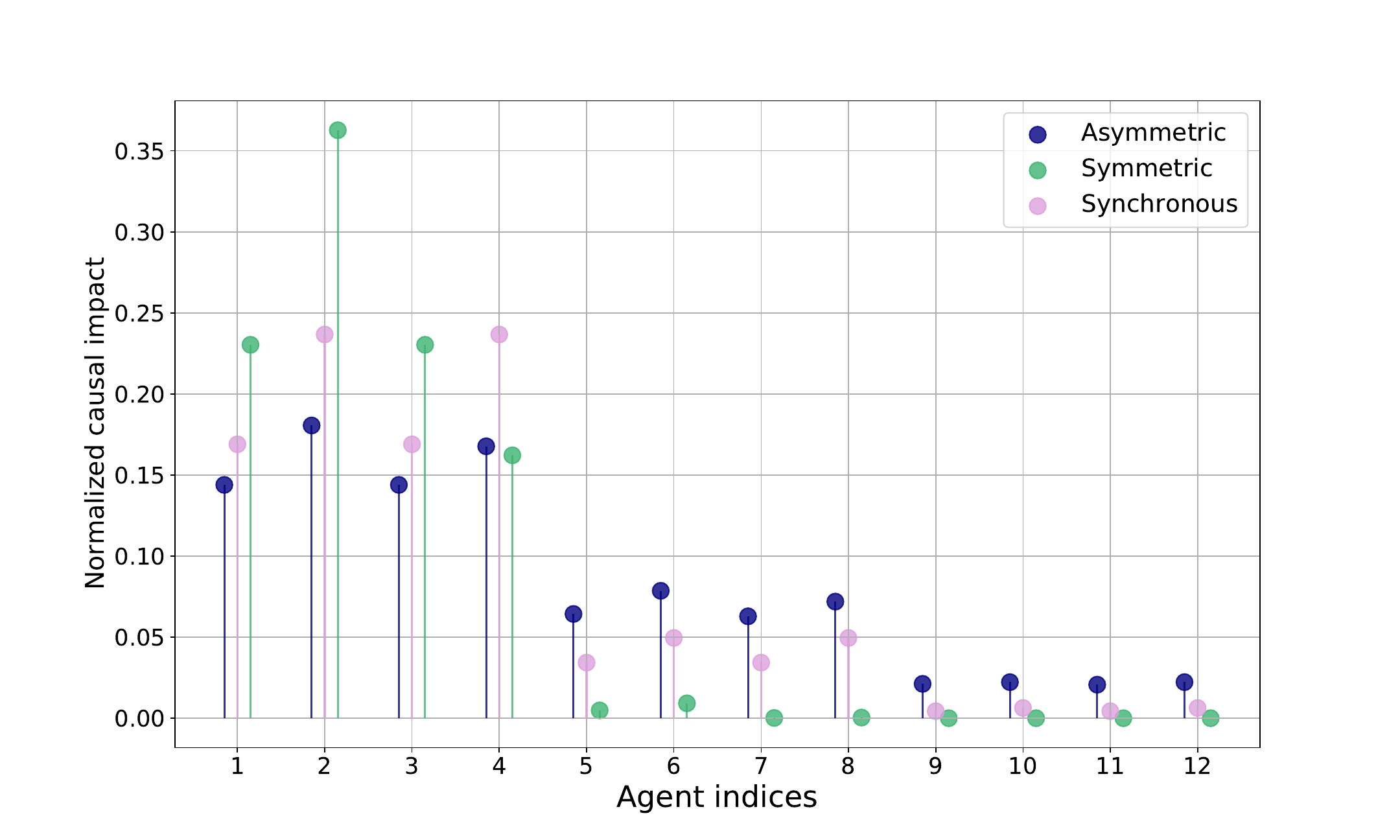}
	\caption{\small Causal impacts of each agent over three scenarios. The scores are normalized such that for each scenario, agents' scores sum up to one. It is clear that the distribution of the scores for the symmetric case has a higher skewness, as suggested by the theoretical results.}
	\label{fig:norm_ranking}
\end{figure}

Finally, in Fig.~\ref{fig:norm_ranking} we plot the causal impact of each agent on the joint decision which are normalized such that the sum of agents' impacts under each strategy equals to 1. This plot reveals that the asymmetric communication protocol results in a more uniform distribution of impacts, whereas the symmetric communication approach leads to a few agents having significant influence on the joint decision. This supports our discussions, suggesting that asymmetric communication fosters a fairer decision-making process that assigns a more uniform impact over participating agents under moderate deviations.

\begin{figure*}[] 
    \centering
  \subfloat[\label{fig:real_sensor_1}]{%
       \includegraphics[width=0.31\linewidth]{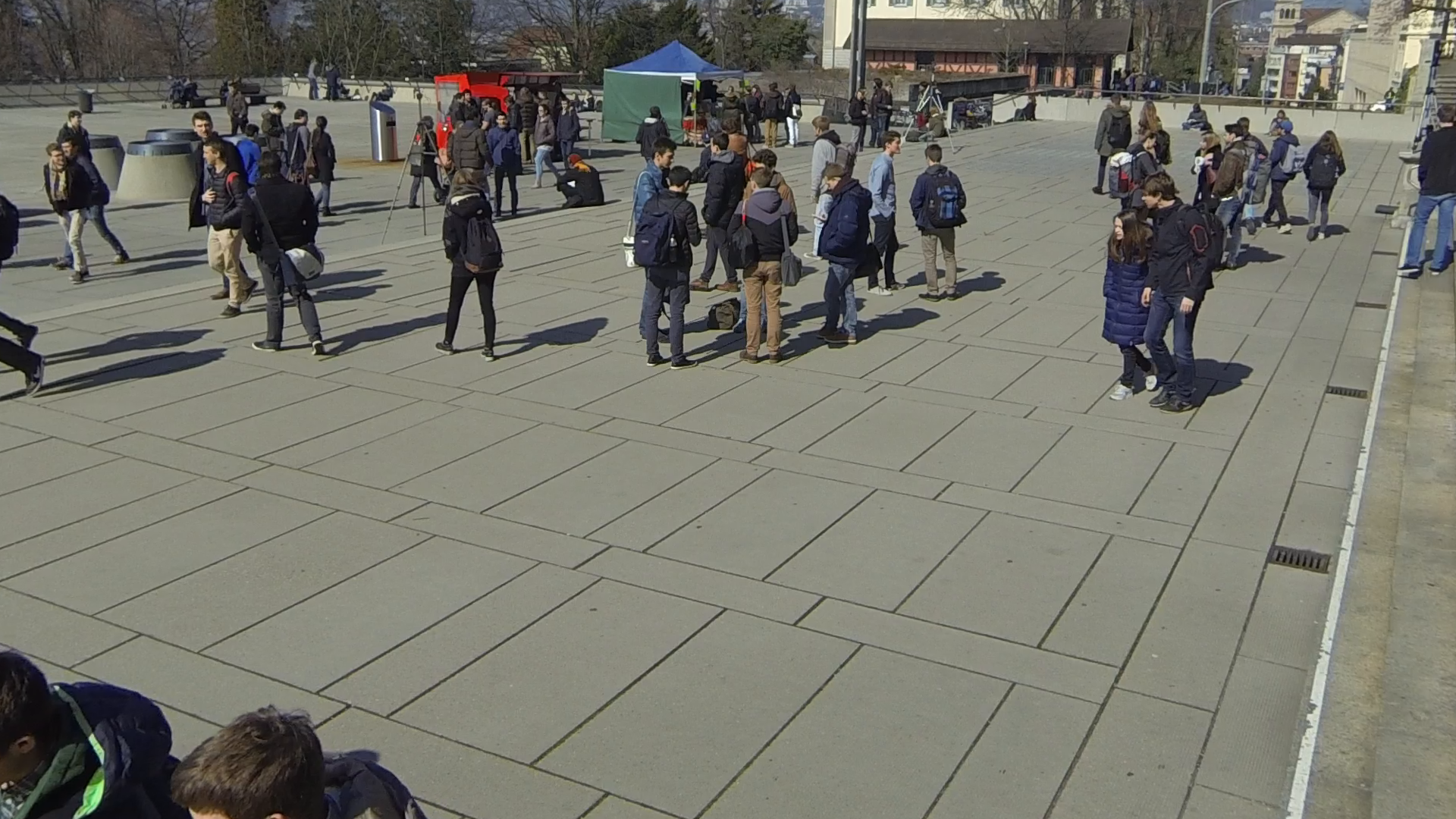}}
    \hfill
  \subfloat[\label{fig:real_sensor_3}]{%
        \includegraphics[width=0.31\linewidth]{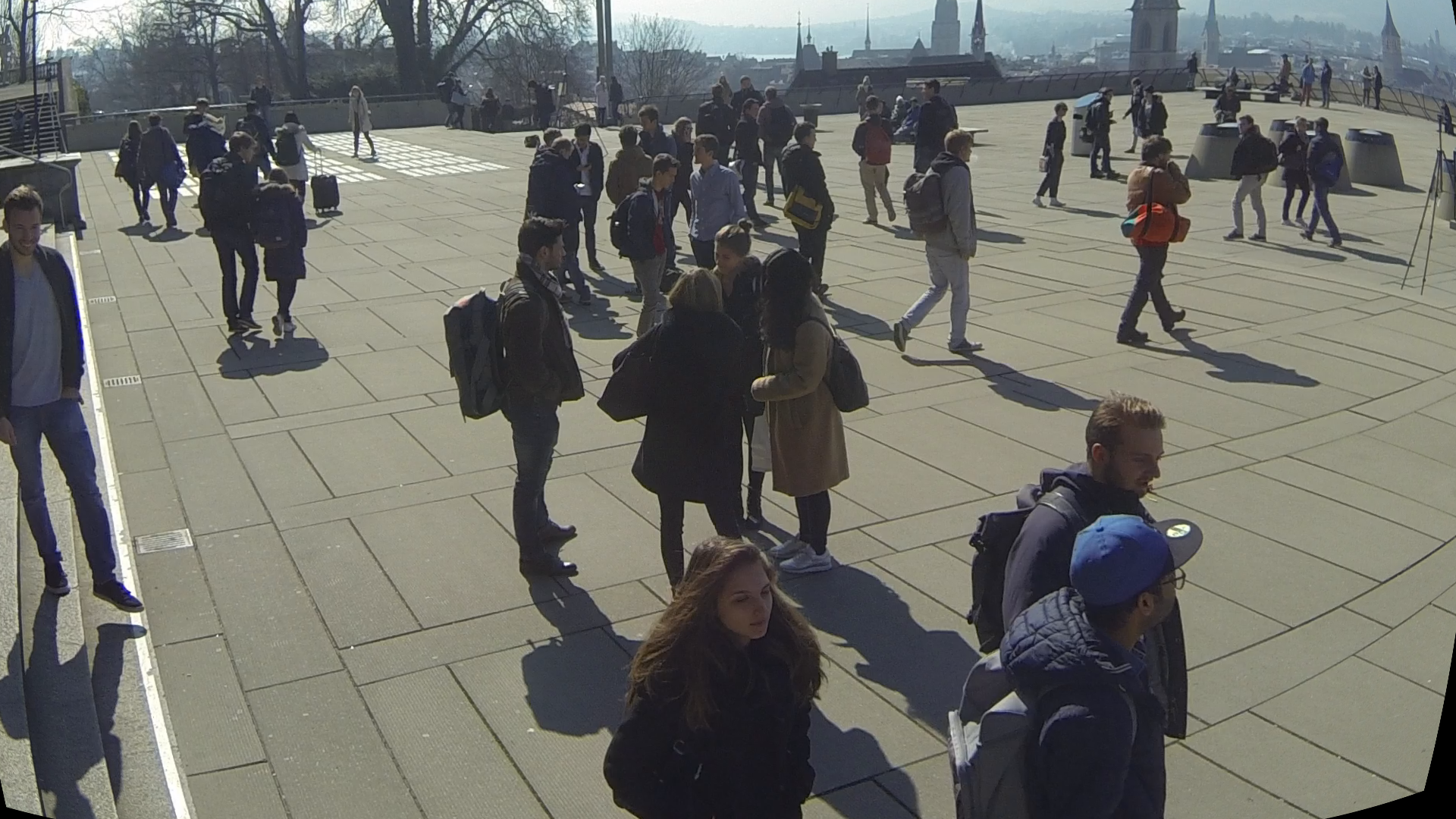}}
        \hfill
  \subfloat[\label{fig:real_sensor_6}]{%
       \includegraphics[width=0.31\linewidth]{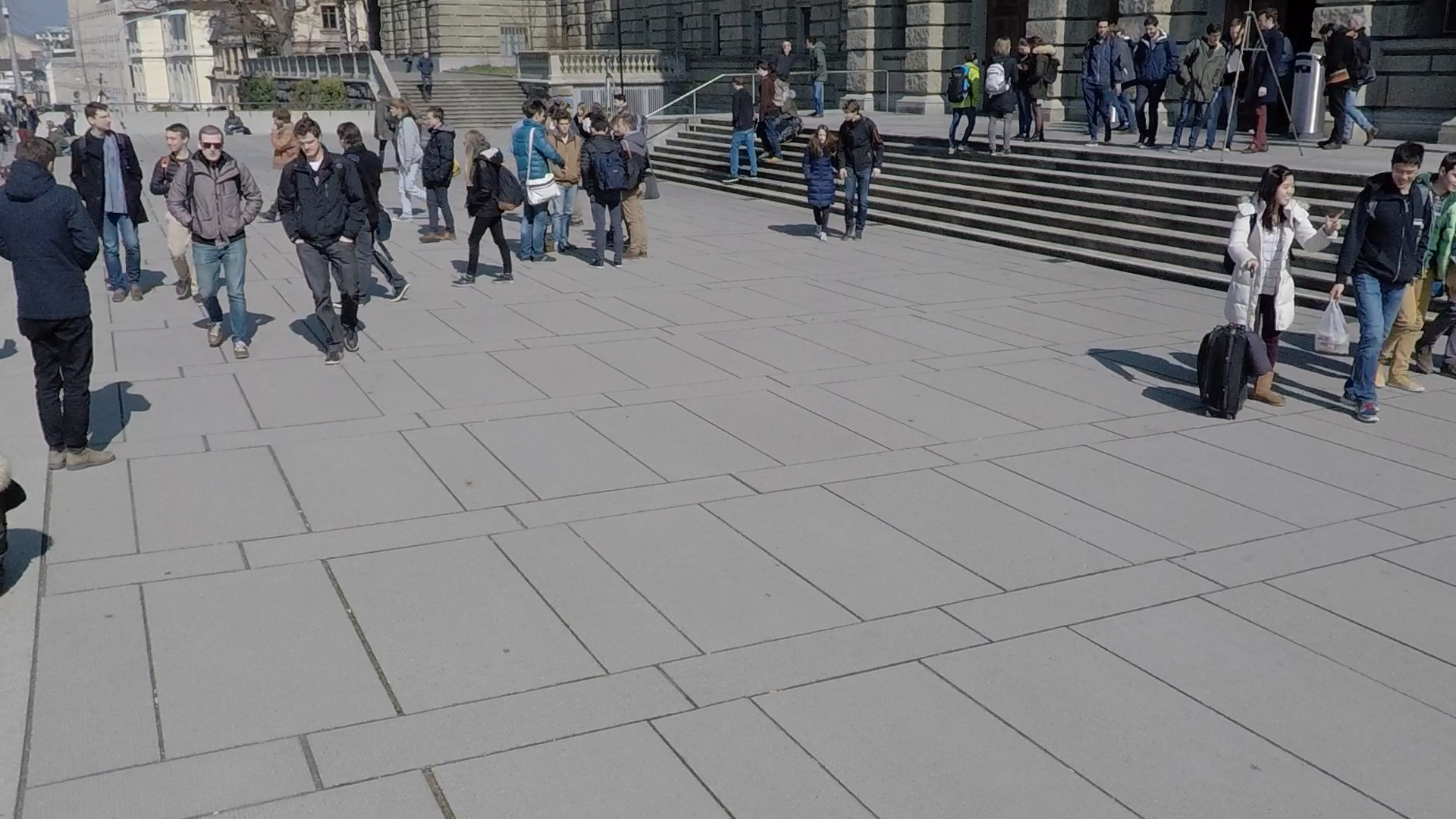}}
  \caption{\small Images are part of the WILDTRACK dataset \cite{chavdarova_wildtrack}, which is acquired in front of the main building of ETH Zurich, Switzerland. In total, the dataset contains $7$ simultaneous image sequences (with a rate of $60$ frames per second), where each image has a resolution of $1920 \times 1080$ pixels. The sample figures $(a)$, $(b)$, and $(c)$ from the dataset capture the same area simultaneously from different perspectives by agents $1$, $3$, and $7$, respectively.}
  \label{fig:photo_samples} 
\end{figure*}

\subsection{Application: Multi-camera crowd counting}

Next, we apply our results to a multi-view crowd-size estimation application using the WILDTRACK dataset from \cite{chavdarova_wildtrack}. This dataset consists of synchronized video frames captured by seven static cameras (functioning as agents in our model, $K=7$) with overlapping fields of view --- see Fig.~\ref{fig:photo_samples} for sample images. 

The primary goal of the agents is to cooperatively track the dynamic size of the crowd in a specific overlapping region observed by all cameras. For this particular application, the aforementioned variables in the paper correspond to the following:
\begin{itemize}
    \item For each agent $k$ (a camera), observation $\bxi_{k,i}$ corresponds to that agent's own recorded image frame at time instant $i$. Note that the cameras record the environment with 60 frames per second, that is, $60$ time instants correspond to one second in total.
    \item A hypothesis $\theta \in \Theta$ is a possible integer for the crowd size, and $\Theta = \{0,1, \dots, 50\}$ is the set of all possible hypotheses. For the current application at hand, it is known that the number of people in the region of interest will not surpass $50$.
    \item To apply Algorithms~\ref{alg:sync_alg}--\ref{alg:symmetric}, we equip the agents with the pre-trained crowd counting neural network from \cite{liu_2019_CVPR}. We then calibrate the likelihood functions of the agents by using the neural network estimates as well some dataset specific samples in order to obtain $L_k(\bxi_{k,i}|\theta)$ for each $\theta \in \Theta$ and for all agents. 
    \item We set the weights FC assigns to the $K=7$ agents uniform, i.e., $\pi_k = \frac{1}{7}$ for each agent $k$. Moreover, for both asynchronous scenarios, the participation probabilities $p_k$ are set at $0.5$.
\end{itemize}

\begin{figure}[ht]
	\centering
	\includegraphics[width=.94\linewidth]{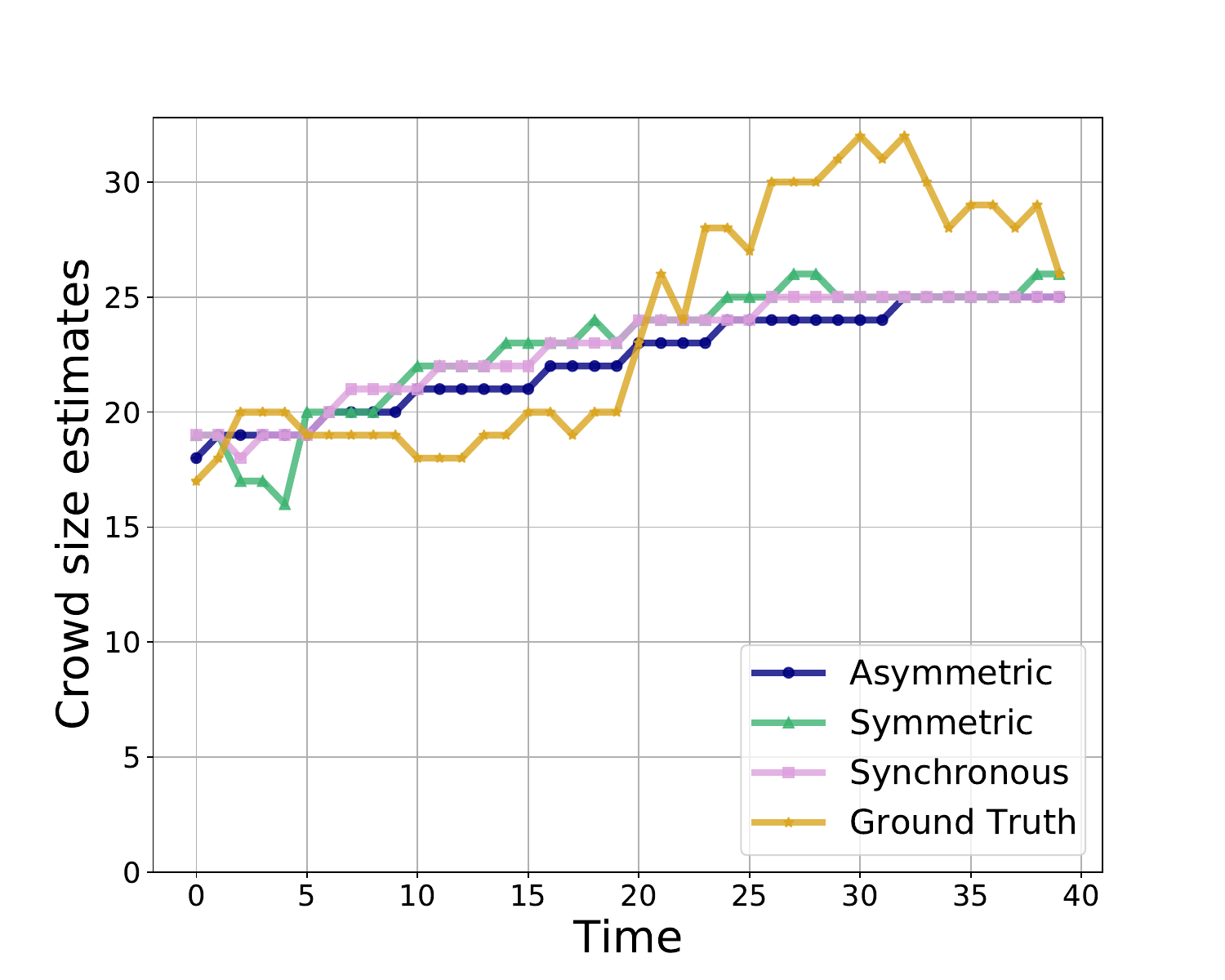}
	\caption{\small Crowd count estimates of FC under all three scenarios, along with the true number of people (ground truth). The estimates of FC correspond to the hypothesis that maximizes the belief at each time instant.}
	\label{fig:real_estimates}
\end{figure}

Under these parameters, Fig.~\ref{fig:real_estimates} illustrates the FC's crowd count estimates under all three scenarios, along with the actual number of people present (ground truth). Here, the estimates of FC represent the hypothesis $\theta$ that maximizes the belief $\bmu_{i} (\theta)$ at each time instant $i$. Furthermore, in Fig.~\ref{fig:real_norm_ranking}, the normalized causal impact scores of each camera on the joint decision are presented for all three scenarios using uniform beliefs intervention. Notably, the score distribution in the symmetric case exhibits the highest level of skewness, which mirrors the conclusion with the synthetic data in Fig.~\ref{fig:norm_ranking}.

\begin{figure}[ht]
	\centering
	\includegraphics[width=.99\linewidth]{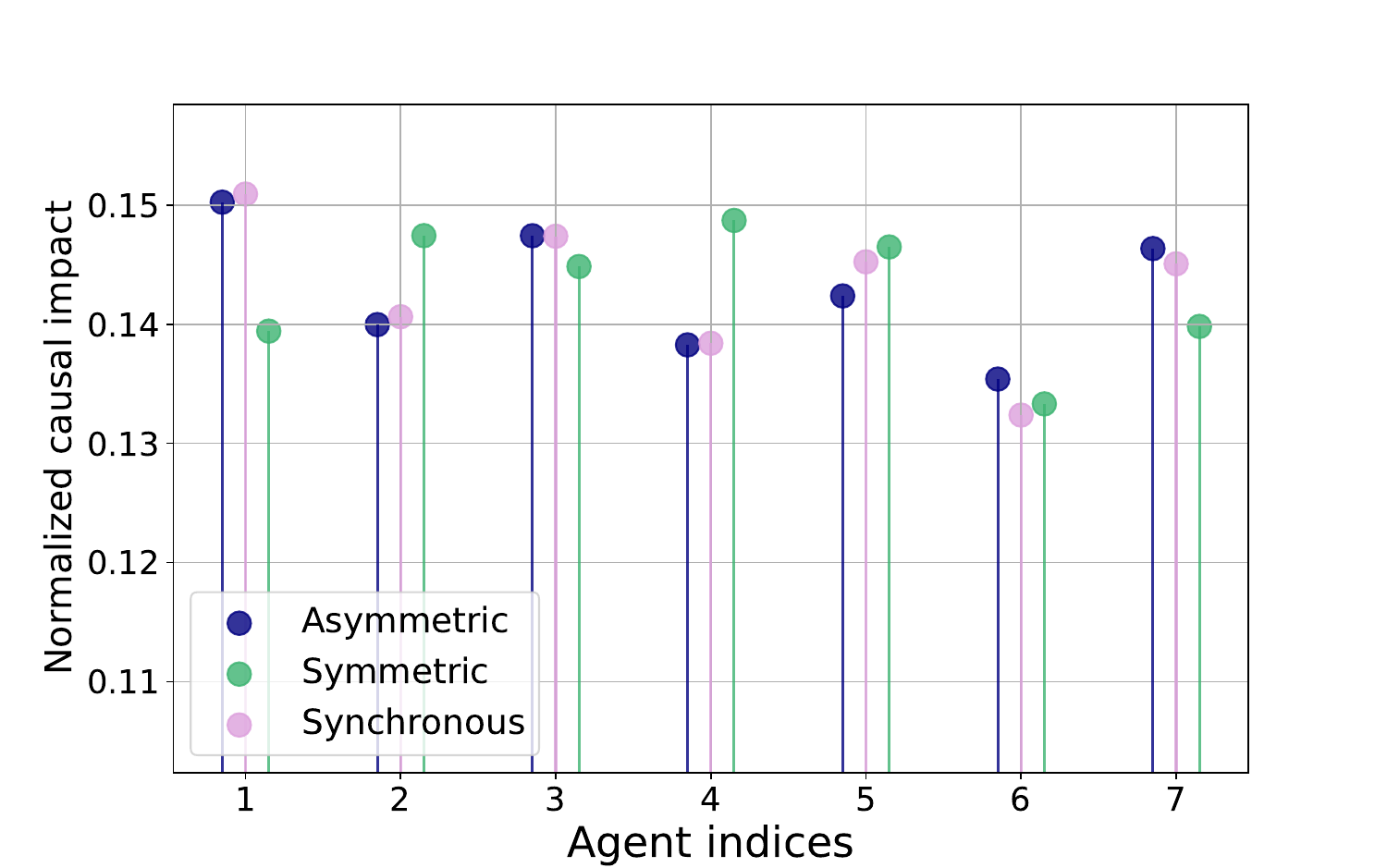}
	\caption{\small Normalized causal impact scores of each camera on the joint decision for all three scenarios. As in the Fig.~\ref{fig:norm_ranking} for synthetic data, distribution of the scores for the symmetric case has the highest skewness.}
	\label{fig:real_norm_ranking}
\end{figure}

\section{Conclusion}

In this paper, we examined a collaborative prediction framework for identifying and quantifying causal impact of an agent where agents exchange their local inferences about a common target variable with a fusion center. We incorporated two asynchronicity scenarios that differ in terms of whether the fusion center updates the agents that do not provide information. Utilizing a causal theoretical framework, we derived expressions that describe how each agent's impact on the collective decision varies based on factors such as the distribution of data (via KL divergences representing the informativeness of data) received by the agents and their participation frequencies. 

The results reveal that an agent has a stronger impact on the joint decision in the symmetric (reciprocal) communication protocol compared to the asymmetric communication protocol if the misinformation strength surpasses some threshold. This implies that asymmetric communication protocols are more robust in the face of adversarial attacks. Nevertheless, symmetric communication offers greater resilience to moderate deviations from the usual, such as in the case of malfunctioning agents without harmful intentions.

Future directions include extending the causal impact analysis on this federated framework to decentralized peer-to-peer networks, and also examining different decision aggregation strategies at the server side such as median-based robust fusion \cite{zoubir2018robust}. Moreover, the current work is based on the traditional social learning framework which is reported to make the agents stubborn. Extending this work to adaptive agents \cite{bordignon2021adaptive} is another interesting direction to pursue.

\appendices

\section{Proof of Theorem~\ref{theorem:no_intervention}}\label{appendix:pre_int}

In this section, we prove that under both scenarios considered for asynchronous behavior, and without any intervention, the expected beliefs at the agents place the value $1$ on the true hypothesis as $i \to \infty$. Note that the proof for the synchronous communication case is already exists in the literature \cite{kayaalp2023fusion,lalitha_2018,nedic_2017,bordignon2021adaptive}. It can also be recovered from our novel derivations here by setting $p_k \to 1$ for each agent $k$.
\subsection{Asymmetric communication}
Recall that $\bq_{k,i}$ is the Bernoulli random variable that is equal to $1$ if agent $k$ is connected to the FC at time $i$ and is sending information, i.e.,
\begin{equation}
\bq_{k,i} \!=\! \begin{cases}
    1,\ \text{if agent $k$ is sending information to FC at time $i$}\\
    0,\ \text{else}.
\end{cases}
\end{equation}
Define the scalar random variables
\begin{equation}
    \blambda_{i} (\theta) \triangleq \log \dfrac{\bmu_{i} (\theta^\circ) }{\bmu_{i} (\theta) }, \quad\bx_{k,i}(\theta) \triangleq \log \dfrac{ L_k(\bxi_{k,i} | \theta^\circ)}{ L_k(\bxi_{k,i} | \theta)}.
\end{equation} 
In this case, to derive the recursion of log-belief ratios, observe from \eqref{eq:dif_adapt_step} that if an agent $k$:
\begin{itemize}
    \item is not sending any information at time instant $i$ (i.e., it is idle), then the FC uses its own log-belief ratio from the previous time instant $\blambda_{i-1} (\theta)$ to fill the missing information of agent $k$ during aggregation.
    \item Otherwise, if agent $k$ is sending its intermediate belief to the server, then its contribution on the FC decision is $\blambda_{i-1}(\theta) + \bx_{k,i}(\theta)$.
\end{itemize}
In light of the observation above, the contribution of each agent $k$ at time $i$ is a function of $\bq_{k,i}$ and can be written as
\begin{align}\label{eq:lbr_contr_two_way_ni_1}
\blambda_{i-1}(\theta),\quad \bq_{k,i} = 0\\
\blambda_{i-1}(\theta) + \bx_{k,i},\quad \bq_{k,i} = 1
\end{align}
which is equivalent to
\begin{align}\label{eq:lbr_contr_two_way_ni}
\blambda_{i-1}(\theta)(1-\bq_{k,i}) + (\blambda_{i-1}(\theta) &+ \bx_{k,i})\bq_{k,i} \notag \\
&= \blambda_{i-1}(\theta) + \bx_{k,i}\bq_{k,i}.
\end{align}
Under the information fusion rule \eqref{eq:geometric_fusion}, the fusion center update becomes
\begin{equation}\label{eq:lbr_update_two_way_ni}
 \blambda_i(\theta) = \blambda_{i-1}(\theta) + \brma_i^{\T}\bx_{i}(\theta),
\end{equation}
where we are introducing the vectors
\begin{equation}
    \brma_i \triangleq [\pi_1\bq_{1,i},\ldots,\pi_k\bq_{K,i}]^{\T}
\end{equation}
and 
\begin{equation}
    \brmx_i \triangleq [\bx_{1,i},\dots,\bx_{K,i}]^{\T}.
\end{equation}
Taking expectations over randomness of data yields
\begin{align}
 \lambda_i(\theta) & \triangleq \E[\blambda_i(\theta)]\notag \\ &\stackrel{\eqref{eq:lbr_update_two_way_ni}}{=} \E[\blambda_{i-1}(\theta) + \brma_i^{\T}\brmx_{i}(\theta)]
 \notag \\
 &= \lambda_{i-1}(\theta) + \rma^{\T}\rmd(\theta),
\end{align}
with the definitions:
\begin{align}
   \rma &\triangleq [\pi_1 p_1, \ldots, \pi_K p_K]^{\T}, \notag \\
     \rmd(\theta) &\triangleq \Big [d_1(\theta), \ldots, d_K (\theta) \Big ]^{\T}, \notag \\     d_k (\theta) &\triangleq \dkl \big (L_k(\cdot|\theta^{\circ}) || L_k(\cdot|\theta) \big).
\end{align}
The global identifiability assumption ensures that for each $\theta \neq \theta^\circ$, there exists at least one agent $k^\star$ that satisfies $d_{k^\star}(\theta) > 0$. Therefore, for each wrong hypothesis $\theta \neq \theta^\circ$, it holds that $\lambda_i(\theta) \to + \infty$ as long as $\pi_{k^\star} p_{k^\star} > 0$, which in turn means $\mu_{i}(\theta^\circ) \to 1$ for the collective decision of agents as $i \to \infty$. 

\subsection{Symmetric communication}
In this scenario, agents do not get updated from the FC if they do not send information to the FC. Therefore, at each time instant, they can have different beliefs and priors. This situation necessitates the study of the evolution of a local belief (prior) $\bmu_{k,i}$ and the corresponding local log-belief ratio (LBR) $\blambda_{k,i}$. If we define a vector consisting of all LBRs from all agents as
\begin{equation}                                                                                               
   \bLambda_i (\theta) \triangleq [\blambda_{1,i} (\theta),\ldots,\blambda_{K,i} (\theta)]^{\T} ,
\end{equation}
then, the LBR at the server side evolves according to the following dynamics:
\begin{equation}\label{eq:lbr_ss_2}
 \blambda_i(\theta) = \brma_i^{\T}(\bLambda_{i-1}(\theta) + \brmx_{i}(\theta)) + (\bar{\brma}_i^{\T} \mathds{1}_K) \blambda_{i-1}(\theta),
\end{equation}
where we use the bar notation to denote additive complements of the random variables, i.e., 
\begin{equation}
\bar{\brma}_i \triangleq \pi - \brma_i = [\pi_1 \bar\bq_{1,i}, \ldots, \pi_K \bar\bq_{K,i}]^{\T}, \:\: \bar\bq_{k,i} \triangleq 1- \bq_{k,i}
\end{equation}
and also use the definition
\begin{equation}
    \brmx_{i}(\theta) \triangleq [\bx_{1,i} (\theta), \bx_{2,i} (\theta), \dots, \bx_{K,i} (\theta)]^{\T}.
\end{equation}
Taking the expectation of both sides in \eqref{eq:lbr_ss_2} yields
\begin{equation}\label{eq:c2_ni_server_der}
 \lambda_i(\theta) = \rma^{\T}(\Lambda_{i-1}(\theta) + \rmd(\theta)) + (\bar{\rma}^{\T} \mathds{1}_K) \lambda_{i-1}(\theta).
\end{equation}
On the other hand, the LBR evolution for an agent $k$ depends on whether it provides information to the FC update in \eqref{eq:lbr_ss_2}, and is given by
\begin{equation}
\blambda_{k,i}(\theta) = \blambda_i(\theta)\bq_{k,i} + (\blambda_{k,i-1}(\theta) + \bx_{k,i}(\theta))\bar\bq_{k,i}.
\end{equation}
By taking expectations of both sides we arrive at
\begin{equation}\label{eq:c2_ni_der_agent}
    \lambda_{k,i}(\theta) = \E[\blambda_i(\theta)\bq_{k,i}] + (\lambda_{k,i-1}(\theta) + d_{k}(\theta))\bar p_{k},
\end{equation}
where $\bar p_{k} \triangleq 1 - p_{k}$. Note that in general,
\begin{equation}
    \E[\blambda_i(\theta)\bq_{k,i}] \neq  \E[\blambda_i(\theta)]\E[\bq_{k,i}]
\end{equation}
due to the information sharing of server is conditioned on agents' information sharing, and that they are not independent. However, it holds that
\begin{align}
&\E[\blambda_i (\theta)\bq_{k,i}] \notag \\&= \E\bigg[\bq_{k,i}\bigg( \sum_{\ell =1}^K \pi_\ell(\blambda_{\ell,i-1} (\theta) + \bx_{\ell,i} (\theta))\bq_{\ell,i} \notag \\ & \qquad \qquad + \sum_{\ell=1}^K \pi_\ell \bar\bq_{\ell,i}\blambda_{i-1} (\theta)
\bigg)\bigg] \notag \\
&= p_k\bigg(\pi_k(\lambda_{k,i-1} (\theta) + d_k (\theta)) \notag \\ &\qquad\quad +\sum_{\ell \neq k}\pi_\ell(\lambda_{\ell,i-1} (\theta) + d_{\ell} (\theta))p_{\ell} + \sum_{\ell \neq k}\pi_\ell\bar p_{\ell} \lambda_{i-1} (\theta) \bigg).
\end{align}
Next, we introduce the variables
\begin{align}
    \rms &\triangleq [s_1,\dots, s_K]^{\T},\ s_k \triangleq p_k\sum_{\ell \neq k} \pi_\ell\bar p_\ell,  \\
    \rmp & \triangleq [p_1, \dots, p_K]^{\T},\ \bar{\rmp} \triangleq \mathds{1}_K-\rmp\\
    \sigma & \triangleq \sum_{k=1}^K \pi_k \bar p_k = \bar{a}^{\T} \mathds{1}_K
\end{align}
and the $K \times K$ diagonal matrices 
\begin{equation}
        \rmA \triangleq \diag(\rma), \quad \rmP \triangleq \diag(\rmp), \quad \bar{\rmP} \triangleq \diag(\bar{\rmp}) .
\end{equation}
Accordingly, if we also define the $(K+1)$ dimensional extended LBR vector as
\begin{equation}
\bar \Lambda_i(\theta) \triangleq \left[\begin{array}{@{}c}
 \:\: \Lambda_i(\theta)
 \\
   \:\: \lambda_i(\theta)
\end{array}\right],
\end{equation}
then, by relations \eqref{eq:c2_ni_server_der} and \eqref{eq:c2_ni_der_agent}, we arrive at a linear recursion of the following form:
\begin{equation}\label{eq:linear_rec_ni_c2}
    \bar \Lambda_i(\theta) = \rmR \bar \Lambda_{i-1}(\theta) + \rmU \rmd(\theta).
\end{equation}
Here, we introduced the $(K+1) \times (K+1)$ dimensional matrix
\begin{align}
\rmR  &\triangleq 
 \left(\begin{array}{@{}c|c@{}}
  \bar \rmP \rmA + \bar {\rmP} + \rmp\rma^{\T}
  & \: \: \rms \\ 
\hline 
  \rma^{\T} &
  \: \:  \sigma
\end{array}\right)
\end{align}
and the $(K+1) \times K$ dimensional matrix
\begin{align}
\rmU  &=\left(\begin{array}{@{}c}
  \bar \rmP \rmA + \bar{\rmP} + \rmp\rma^{\T}
 \\
\hline
  \rma^{\T}
\end{array}\right).
\end{align}
Since $\rmR$ is a stochastic matrix with nonzero entries, it is also a primitive matrix. Therefore, $\rmR$ has a Perron eigenvector $\pi_{\rmR}$ with all positive entries that corresponds to the largest magnitude eigenvalue. Hence, it holds that 
\begin{equation}
\frac 1 i \bar \Lambda_i(\theta) \to \pi_{\rmR} \rmU \rmd. 
\end{equation}
Under global identifiability assumption that for at least one agent $d_k (\theta) >0$, expected beliefs at the agents place the value 1 on the true hypothesis. 

\section{Proof of Theorem~\ref{theorem:asymmetric}}\label{appendix:asymmetric}
Under an intervention on agent $m$, the contribution of any agent $k \neq m$ at time $i$ is the same as in the pre-intervention case and is given by \eqref{eq:lbr_contr_two_way_ni}. Furthermore, the intervened agent $m$'s one-step contribution is equal to the intervention strength if agent $m$ is present, and equal to the FC's log-belief ratio (LBR) otherwise. In other words, agent $m$'s contribution under an intervention becomes
\begin{equation}\label{eq:two_way_contr_inter}
\wblambda_{i-1}(\theta)(1-\bq_{m,i}) + c\bq_{m,i}.
\end{equation}
In \eqref{eq:two_way_contr_inter}, we use $\sim$ on top of pre-intervention variables to denote the post-intervention counterparts of those variables and also note that we are defining
\begin{equation}\label{eq:constant_lbr_def}
    c \triangleq  \log \frac{\mu_{m}(\theta^{\circ})}{\mu_{m}(\theta)}
\end{equation}
for brevity of notation. Aggregating the contributions \eqref{eq:lbr_contr_two_way_ni} and \eqref{eq:two_way_contr_inter} of each agent according to the geometric averaging rule \eqref{eq:geometric_fusion} yields the following update for the LBR of the fusion center:
\begin{equation}\label{eq:lbr_update_two_way}
 \wblambda_i(\theta) = (1-\pi_m)\wblambda_{i-1}(\theta) + \brma_i^{\T}\widetilde{\brmx}_{i}(\theta),
\end{equation}
where we introduced the log-belief ratio counterpart vector under intervention:
\begin{equation}
    \widetilde{\brmx}_{i}(\theta) \triangleq [\bx_{1,i} (\theta),\ldots,\bx_{m-1,i} (\theta), c,\bx_{m+1,i}(\theta),\ldots,\bx_{K,i}(\theta)]^{\T}
\end{equation}
where $c$ is taken from \eqref{eq:constant_lbr_def}.
According to these definitions, for the expected LBR, it holds that
\begin{align}
 \wlambda_i(\theta) &\stackrel{\eqref{eq:lbr_update_two_way}}{=} \E\left[(1-\pi_m)\wblambda_{i-1}(\theta) + \brma_i^{\T}\widetilde{\brmx}_{i}(\theta) \right]
 \notag \\
  &= (1-\pi_m)\wlambda_{i-1} (\theta) + \rma^{\T} \E [\widetilde{\brmx}_{i}(\theta)] \notag \\
 &= (1-\pi_m)\wlambda_{i-1} (\theta) + \rma^{\T}\widetilde{\rmd}(\theta)
\end{align}
where the vector of KL divergences under an intervention on agent $m$ is defined as
\begin{equation}
    \widetilde{\rmd} (\theta) \triangleq [d_1 (\theta),\ldots,d_{m-1}(\theta), c,d_{m+1}(\theta),\ldots,d_{K}(\theta)]^{\T}.
\end{equation}
Consequently, in the limit, the LBR of the server under an intervention on agent $m$ is given by
\begin{align}
\lim_{i \to \infty} \wlambda_i(\theta) &= \frac 1 {\pi_m}  \rma^{\T}\widetilde{\rmd}(\theta) \notag \\ &= \frac 1 {\pi_m} \sum_{k \neq m} \pi_{k} p_{k} d_{k} (\theta) +  p_m c .
\end{align}

\section{Proof of Theorem~\ref{theorem:symmetric}}\label{appendix:symmetric}

For simplicity of notation and without loss of generality, we intervene on agent $m=1$. This means that whenever it is active ($\bq_{1,i} = 1$), the contribution in terms of the log-belief ratio will be equal to $c$ defined in \eqref{eq:constant_lbr_def}. Under such scheme, the linear recursion from \eqref{eq:linear_rec_ni_c2} transforms, under the intervention on $m=1$, to
\begin{equation}
\widetilde{\Lambda}_i (\theta) = \widetilde{\rmR} \widetilde{\Lambda}_{i-1} (\theta) + \widetilde{\rmU} \widetilde{\rmd}(\theta)
\end{equation}
where $\widetilde{\rmR}$ is the submatrix of $\rmR$ without the first column and row, $\widetilde{\rmU}$ is the submatrix of $\rmU$ without the first row, and $\widetilde{\rmd} (\theta) = [c, d_2 (\theta),\dots, d_k (\theta)]$. Note that the largest eigenvalue of $\widetilde{\rmR}$ is smaller than 1, hence we obtain
\begin{align}
    \widetilde{\Lambda}_\infty (\theta) &= (\rmI + \widetilde{\rmR} + \widetilde{\rmR}^2 \dots) \widetilde{\rmU} \widetilde{\rmd} (\theta) \notag \\
    &= (\rmI- \widetilde{\rmR} )^{-1}\widetilde{\rmU} \widetilde{\rmd}.
\end{align}
We therefore need to invert $(\rmI-\widetilde{\rmR})$. We write $\rmI-\widetilde{\rmR}$ in block matrix form:
\begin{align}
\rmM &\triangleq \rmI- \widetilde{\rmR} \notag \\
&= \left(\begin{array}{@{}c|c@{}}
  {\widetilde{\rmP}} - (\rmI-{\widetilde{\rmP}}) \widetilde{\rmA} - {\widetilde{\rmp}}\widetilde{\rma}^{\T}
  & -\widetilde{\rms} \\
\hline
  -\widetilde{\rma}^{\T} &
  1-\sigma
\end{array}\right) \notag \\
&\triangleq
\left(\begin{array}{@{}c|c@{}}
  \rmM_{11}
  & \rmM_{12} \\
\hline
  \rmM_{21}&
  \rmM_{22}
\end{array}\right)
\end{align}
where 
\begin{equation}
    \widetilde \rmp \triangleq [p_2,\ldots,p_K], \quad \widetilde \rma \triangleq [\pi_2p_2,\ldots,\pi_Kp_K], 
\end{equation}
and $\rmM_{11}, \rmM_{12}, \rmM_{21}, \rmM_{22}$ are submatrices of dimensions $(K-1) \times (K-1), (K-1)\times 1 , 1\times (K-1), 1 \times 1,$ respectively, and 
\begin{equation}
        \widetilde \rmP \triangleq \diag(\widetilde \rmp), \quad \widetilde \rmA \triangleq \diag(\widetilde \rma).
\end{equation}
 Using the Schur complement of $\rmM$ \cite[Section 1.4]{sayed_2022}
\begin{equation}
S \triangleq \rmM_{22} - \rmM_{21}(\rmM_{11})^{-1}\rmM_{12},
\end{equation}
which is scalar,
we can write the last row of $\rmM^{-1}$ as
\begin{align}
&[-S^{-1}\rmM_{21}(\rmM_{11})^{-1}\quad |\quad S^{-1}] \notag \\
&= S^{-1}[-\rmM_{21}(\rmM_{11})^{-1}\quad |\quad 1].
\end{align}
Note that we are only interested in finding the last row of $M^{-1}$ as only this row contributes to the FC's LBR in steady state, which is the last entry of $\widetilde{\Lambda}_{\infty}(\theta)$.

First, we find $\rmM_{11}^{-1}$. Since $\rmM_{11}$ is the sum of a diagonal matrix and a rank-one matrix, we can calculate $\rmM_{11}^{-1}$ by the matrix inversion formula \cite[Section 1.4]{sayed_2022}:
\begin{multline}
\rmM_{11}^{-1} = ({\widetilde{\rmP}} - (\rmI-{\widetilde{\rmP}}) \widetilde{\rmA})^{-1}\\
+ \frac {({\widetilde{\rmP}} - (\rmI-{\widetilde{\rmP}}) \widetilde{\rmA})^{-1}\widetilde{\rmp} {\widetilde{\rma}}^{\T}({\widetilde{\rmP}} - (\rmI-{\widetilde{\rmP}}) \widetilde{\rmA})^{-1}}{1-{\widetilde{\rma}}^{\T}({\widetilde{\rmP}} - (\rmI-{\widetilde{\rmP}}) \widetilde{\rmA})^{-1}\widetilde{\rmp}}.
\end{multline}
Consequently, 
\begin{multline}
-\rmM_{21}(\rmM_{11})^{-1} = {\widetilde{\rma}}^{\T}({\widetilde{\rmP}} - (\rmI-{\widetilde{\rmP}}) \widetilde{\rmA})^{-1}\\
+ \frac {{\widetilde{\rma}}^{\T}({\widetilde{\rmP}} - (\rmI-{\widetilde{\rmP}}) \widetilde{\rmA})^{-1}\widetilde{\rmp} {\widetilde{\rma}}^{\T}({\widetilde{\rmP}} - (\rmI-{\widetilde{\rmP}}) \widetilde{\rmA})^{-1}}{1-{\widetilde{\rma}}^{\T}({\widetilde{\rmP}} - (\rmI-{\widetilde{\rmP}}) \widetilde{\rmA})^{-1}\widetilde{\rmp}}.
\end{multline}
Observe that for $k > 1$, the $k$th element of $-\rmM_{21}(\rmM_{11})^{-1}$ is given by
\begin{align}
\frac{\pi_k}{1-\bar p_k\pi_k}\frac 1 {1-\sum_{\ell \neq 1}\dfrac{\pi_\ell p_\ell}{1-\bar p_\ell \pi_\ell}}.
\end{align}
If we define $s_k \triangleq p_k\sum_{\ell\neq k}\pi_{\ell}\bar p_\ell$, we get
\begin{align}
S &= \rmM_{22} - \rmM_{21}(\rmM_{11})^{-1}\rmM_{12} \notag \\
&= \!\!\sum_{k} \pi_kp_k \!- \!\! \sum_{k\neq 1}\frac{\pi_k s_k}{1-\bar p_k\pi_k}\frac 1 {1-\sum_{\ell \neq 1}\frac{\pi_\ell p_\ell}{1-\bar p_\ell \pi_\ell}}.
\end{align}
Now, we calculate $\widetilde{\rmU} \widetilde{\rmd}(\theta)$. Observe that
\begin{equation}
\widetilde{\rmU} \widetilde{\rmd}(\theta) = \bigg[u_2,\dots,u_K \quad | \quad (\pi_1p_1c + \sum_{k\neq 1} \pi_kp_kd_k(\theta)) \bigg]^{\T}
\end{equation}
where
\begin{equation}
    u_k = p_k\bigg(\pi_1p_1c + \sum_{\ell\neq 1} \pi_\ell p_\ell d_\ell(\theta)\bigg) + (\bar p_k\pi_kp_k + \bar p_k)d_k(\theta).
\end{equation}
As a result, it holds that
 \begin{align}\label{eq:proof_c2_big_term}
     &\widetilde{\lambda}_{\infty} (\theta) = S^{-1}[-\rmM_{21}(\rmM_{11})^{-1}\quad |\quad 1](\widetilde{\rmU} \widetilde{\rmd}(\theta)) \notag\\
     &=S^{-1}\bigg(\sum_{k \neq 1}\dfrac{\pi_kd_k(\theta)(\bar p_k\pi_kp_k + \bar p_k)}{1-\bar p_k\pi_k}\frac 1 {1-\sum_{\ell \neq 1}\dfrac{\pi_\ell p_\ell}{1-\bar p_\ell \pi_\ell}} \notag \\
     &\qquad \quad+\sum_{k\neq 1}\frac{\pi_kp_k}{1-\bar p_k\pi_k}\frac {\pi_1p_1c + \sum_{\ell\neq 1} \pi_\ell p_\ell d_\ell(\theta)} {1-\sum_{\ell \neq 1}\dfrac{\pi_\ell p_\ell}{1-\bar p_\ell \pi_\ell}} \notag \\
     &\qquad \quad+\pi_1p_1c + \sum_{\ell\neq 1} \pi_\ell p_\ell d_\ell(\theta)\bigg) \notag \\
     &=\frac{\sum_{k \neq 1}\dfrac{\pi_kd_k(\theta)(\bar p_k\pi_kp_k + \bar p_k)}{1-\bar p_k\pi_k} + \pi_1p_1c + \sum_{\ell\neq 1} \pi_\ell p_\ell d_\ell(\theta)}{S\bigg(1-\sum_{\ell \neq 1}\dfrac{\pi_\ell p_\ell}{1-\bar p_\ell \pi_\ell}\bigg)}
     \notag \\
     &=\dfrac{\sum_{k \neq 1}\dfrac{\pi_kd_k(\theta)(\bar p_k\pi_kp_k + \bar p_k)}{1-\bar p_k\pi_k} + \pi_1p_1c + \sum_{\ell\neq 1} \pi_\ell p_\ell d_\ell(\theta)}{\bigg(\sum_k \pi_kp_k\bigg)\bigg(1-\sum_{k \neq 1}\dfrac{\pi_k p_k}{1-\bar p_k \pi_k}\bigg) - \sum_{k\neq 1}\dfrac{\pi_ks_k}{1-\bar p_k\pi_k}}.
 \end{align}
Observe that the term in the numerator is equivalent to
\begin{align}
  &\sum\limits_{k \neq 1}\pi_k d_k(\theta) \Big (\dfrac{\bar p_k\pi_kp_k + \bar p_k}{1-\bar p_k\pi_k} + p_k \Big ) + \pi_1 p_1 c  \notag \\
   & = \sum\limits_{k \neq 1}\dfrac{\pi_k d_k(\theta)}{1-\bar p_k\pi_k} + \pi_1 p_1 c .
\end{align}
Furthermore, if we incorporate the following relation for $s_k$
\begin{equation}
    s_k = p_k\sum_{\ell\neq k}\pi_{\ell}\bar p_\ell = p_k\bigg(\sum_{\ell = 1}^K \pi_{\ell}\bar p_\ell - \pi_{k}\bar p_k\bigg)
\end{equation}
to the term in the denominator of \eqref{eq:proof_c2_big_term}, we obtain
\begin{align}
    &\bigg(\sum\limits_{k=1}^K \pi_kp_k\bigg)\bigg(1-\sum\limits_{k \neq 1}\dfrac{\pi_k p_k}{1-\bar p_k \pi_k}\bigg) - \sum\limits_{k\neq 1}\dfrac{\pi_ks_k}{1-\bar p_k\pi_k} \notag \\
    & = \bigg(\sum\limits_{k=1}^K \pi_kp_k\bigg)\bigg(1-\sum\limits_{k \neq 1}\dfrac{\pi_k p_k}{1-\bar p_k \pi_k}\bigg) \notag \\ & \qquad \quad - \Big (\sum_{k = 1}^K \pi_{k}\bar p_k \Big) \sum\limits_{k\neq 1}\dfrac{\pi_kp_k}{1-\bar p_k\pi_k} + \sum\limits_{k\neq 1}\dfrac{\pi_kp_k}{1-\bar p_k\pi_k} \pi_{k}\bar p_k \notag \\
    & = \bigg(1-\sum\limits_{k=1}^K \pi_k \bar p_k\bigg)\bigg(1-\sum\limits_{k \neq 1}\dfrac{\pi_k p_k}{1-\bar p_k \pi_k}\bigg) \notag \\ & \qquad \quad - \Big (\sum_{k = 1}^K \pi_{k}\bar p_k \Big) \sum\limits_{k\neq 1}\dfrac{\pi_kp_k}{1-\bar p_k\pi_k} + \sum\limits_{k\neq 1}\dfrac{\pi_kp_k}{1-\bar p_k\pi_k} \pi_{k}\bar p_k \notag \\
    & = \sum\limits_{k=1}^K \pi_k p_k - \sum\limits_{k \neq 1} \pi_k p_k \notag \\
    & = \pi_1 p_1 .
\end{align}
Thus, the expected LBR in steady state becomes
\begin{align}
    \widetilde{\lambda}_{\infty} (\theta) = \dfrac{1}{\pi_1 p_1}\sum\limits_{k \neq 1} \dfrac{\pi_k d_k (\theta)}{1-\pi_k (1-p_k)}+ c\notag\\
     = \dfrac{1}{\pi_1 p_1}\sum\limits_{k \neq 1} \dfrac{\pi_k d_k (\theta)}{1-\pi_k (1-p_k)}+ \log \frac{\mu_1(\theta^\circ)}{\mu_1(\theta)}.\label{eq:final_result}
\end{align}
Since the choice of $m = 1$ was without loss of generality, replacing the subscripts $1$ by $m$ in \eqref{eq:final_result}, we arrive at the result.

\bibliographystyle{IEEEtran}
\bibliography{ref.bib} 

\end{document}